\newtheorem{theorem}{Theorem}[section]
\newtheorem{definition}{Definition}[section]
\newtheorem{remark}{Remark}[section]
\newtheorem{lemma}{Lemma}[section]
\newtheorem{proposition}{Proposition}[section]
\title{Boosting the Cycle Counting Power of Graph Neural Networks with I$^2$-GNNs}
\author{Yinan Huang$^1$, Xingang Peng$^{1, 2}$, Jianzhu Ma$^3$, Muhan Zhang$^1$\\
$^1$Institute for Artificial Intelligence, Peking University\\
$^2$School of Intelligence Science and Techology, Peking University \\
$^3$Institute for AI Industry Research, Tsinghua University \\
\texttt{\{yinan8114,xingang.peng\}@gmail.com }\\
\texttt{majianzhu@tsinghua.edu.cn, muhan@pku.edu.cn}
}
\begin{document}

\maketitle
\begin{abstract}
Message Passing Neural Networks (MPNNs) are a widely used class of Graph Neural Networks (GNNs). The limited representational power of MPNNs inspires the study of provably powerful GNN architectures. However, knowing one model is more powerful than another gives little insight about what functions they can or cannot express. It is still unclear whether these models are able to approximate specific functions such as counting certain graph substructures, which is essential for applications in biology, chemistry and social network analysis. Motivated by this, we propose to study the counting power of Subgraph MPNNs, a recent and popular class of powerful GNN models that extract rooted subgraphs for each node, assign the root node a unique identifier and encode the root node's representation within its rooted subgraph. Specifically, we prove that Subgraph MPNNs fail to count more-than-4-cycles at node level, implying that node representations cannot correctly encode the surrounding substructures like ring systems with more than four atoms. To overcome this limitation, we propose I$^2$-GNNs to extend Subgraph MPNNs by assigning different identifiers for the root node and its neighbors in each subgraph. I$^2$-GNNs' discriminative power is shown to be strictly stronger than Subgraph MPNNs and partially stronger than the 3-WL test. More importantly, I$^2$-GNNs are proven capable of counting all 3, 4, 5 and 6-cycles, covering common substructures like benzene rings in organic chemistry, while still keeping linear complexity. To the best of our knowledge, it is the first linear-time GNN model that can count 6-cycles with theoretical guarantees. We validate its counting power in cycle counting tasks and demonstrate its competitive performance in molecular prediction benchmarks.

\end{abstract}

\section{Introduction}
Relational and structured data are usually represented by graphs. Representation learning over graphs with Graph Neural Networks (GNNs) has achieved remarkable results in drug discovery, computational chemistry, combinatorial optimization and social network analysis~\cite{bronstein2017geometric, duvenaud2015convolutional, khalil2017learning, kipf2016semi, stokes2020deep, you2018graph, zhang2018link}. Among various GNNs, Message Passing Neural Network (MPNN) is one of the most commonly used GNNs~\cite{zhou2020graph, velivckovic2017graph, scarselli2008graph}. However, the representational power of MPNNs is shown to be limited by the Weisfeiler-Lehman (WL) test~\cite{xu2018powerful, morris2019weisfeiler}, a classical algorithm for graph isomorphism test. MPNNs cannot recognize even some simple substructures like cycles~\cite{chen2020can}. It leads to increasing attention on studying the representational power of different GNNs and designing more powerful GNN models.

The representational power of a GNN model can be evaluated from two perspectives. One is the ability to distinguish a pair of non-isomorphic graphs, i.e., discriminative power. \cite{chen2019equivalence} show the equivalence between distinguishing all pairs of non-isomorphic graphs and approximating all permutation invariant functions (universal approximation). Though the discriminative power provides a way to compare different models, for most GNN models without universal approximation property, it fails to tell what functions these models can or cannot express. Another perspective is to characterize the function classes expressed by a GNN model. In this regard, \cite{chen2020can} discusses the WL test's power of counting general graph substructures. Graph substructures are important as they are closely related to tasks in chemistry~\cite{deshpande2002automated, jin2018junction, murray2009rise}, biology~\cite{koyuturk2004efficient} and social network analysis~\cite{jiang2010finding}. Particularly, cycles play an essential role in organic chemistry. Different types of rings impact the compounds' stability, aromaticity and other chemical properties. Therefore, studying the approximating power of counting substructures, especially cycles, provides a fine-grained and intuitive description of models' representational power and gives insight to real-world practices.
 
Nevertheless, the difficulty of counting cycles is usually underestimated. Although \cite{you2021identity} claim that ID-GNNs can count arbitrary cycles at node level, the proof turns out to be incorrect, since it confuses walks with paths (a cycle is a closed path without repeated nodes while walks allow repeated nodes). In fact, even powerful $2$-FWL test with cubic complexity can only count up to $7$-cycles~\cite{furer2017combinatorial, arvind2020weisfeiler}. The difficulty makes us question whether existing powerful models, such as ID-GNNs, can count cycles properly. 

ID-GNNs can be categorized into a new class of GNNs named Subgraph GNNs~\cite{cotta2021reconstruction, bevilacqua2021equivariant, zhang2021nested, you2021identity, zhao2021stars, papp2021dropgnn}. The core idea is to decompose a graph into a bag of subgraphs and encode the graph by aggregating subgraph representations, though the strategy of extracting subgraphs varies. See \cite{frasca2022understanding, papp2022theoretical} for detailed discussions. Subgraph GNNs have demonstrated their impressive performance by achieving state-of-the-art results on multiple open benchmarks. Theoretically, the discriminative power of existing Subgraph GNNs is known to be strictly stronger than WL test and weaker than 3-WL test~\cite{frasca2022understanding}. However, it is fair to say we still do not know the approximation power of Subgraph GNNs in terms of counting substructures.

\textbf{Main contributions}. In our work, we propose to study the representational power of Subgraph GNNs via the ability to count a specific class of substructures---cycles and paths, because they are the bases to represent some important substructures such as ring systems in chemistry. We focus on Subgraph MPNNs, a subclass of Subgraph GNNs covering \cite{cotta2021reconstruction, zhang2021nested, you2021identity}. Our main contribution include
\begin{itemize}[leftmargin=15pt, itemsep=2pt, parsep=2pt, partopsep=1pt, topsep=-3pt]
    \item We prove that Subgraph MPNNs can count $3$-cycles and $4$-cycles, but cannot count $5$-cycle or any longer cycles at node level. This result is unsatisfying because only a small portion of ring systems are $4$-cycles. 
    It also negates the previous proposition that ID-GNNs can use node representations to count arbitrary cycles~\cite{you2021identity}.
    \item To overcome the limitation, we propose I$^2$-GNNs that extend Subgraph MPNNs by using multiple node identifiers. The main idea is to tie each subgraph with a node pair, including a root node and one of its neighbors. 
    For each resulting subgraph we label the node pair with unique identifiers, which is the key to increasing the representational power. 
    \item  Theoretically, we prove that I$^2$-GNNs are strictly more powerful than WL test and Subgraph MPNNs, and partially more powerful than $3$-WL test. Importantly, we prove I$^2$-GNNs can count all cycles with length less than $7$, covering important ring systems like benzene rings in chemistry. Given bounded node degree, I$^2$-GNNs have linear space and time complexity w.r.t. the number of nodes, making it very scalable in real-world applications. To our best knowledge, I$^2$-GNN is the first linear-time GNN model that can count $6$-cycles with rigorous theoretical guarantees. Finally, we validate the counting power of I$^2$-GNNs on both synthetic and real-world datasets. We demonstrate the highly competitive results of I$^2$-GNNs on multiple open benchmarks compared to other state-of-the-art models.
\end{itemize}

\vspace{-7pt}
\section{Preliminaries}
\vspace{-5pt}
Let $G=(V,E)$ be a simple and undirected graph where $V=\{1, 2, 3, ...,N\}$ is the node set and $E\subseteq V\times V$ is the edge set. We use $x_i$ to denote attributes of node $i$ and $e_{i,j}$ to denote attributes of edge $(i,j)$. We denote the neighbors of node $i$ by $N(i)\triangleq \{j\in V|(i,j)\in E\}$. A subgraph $G_S=(V_S, E_S)$ of $G$ is a graph with $V_S\subseteq V$ and $E_S\subseteq E$.

In this paper, we focus on counting paths and cycles. A (simple) $L$-path is a sequence of edges $[(i_1,i_2), (i_2,i_3),...,(i_{L},i_{L+1})]$ such that all nodes are distinct: $i_1\ne i_2\ne ...\ne i_{L+1}$. A (simple) $L$-cycle is an $L$-path except that $i_1=i_{L+1}$. Obviously, we have $L\ge 3$ for any cycle. Two paths/cycles are considered equivalent if their sets of edges are equal. The count of a substructure $S$ ($L$-cycle or $L$-path) of a graph $G$, denoted by $C(S, G)$, is the total number of inequivalent substructures occurred as subgraphs of the graph. The count of substructures $S$ of a node $i$, denoted by $C(S, i, G)$, is the total number of inequivalent substructures involving node $i$. Specifically, we define $C(\text{cycle}, i, G)$ by number of cycles containing node $i$, and $C(\text{path}, i, G)$ by the number of paths starting from $i$.

Below we formally define the counting task at both graph and node level via distinguishing power. The definitions are similar to the approximation of counting functions~\cite{chen2020can}. 
\begin{definition}[Graph-level counting]
Let $\mathcal{G}$ be the set of all graphs and $\mathcal{F}_{\text{graph}}$ be a function class over graphs, i.e., $f:\mathcal{G}\to \mathbb{R}$ for $f\in \mathcal{F}_{\text{graph}}$. We say $\mathcal{F}_{\text{graph}}$ can count a substructure $S$ at graph level, if for all pairs of graphs $G_1, G_2 \in \mathcal{G}$ satisfying $C(S, G_1)\ne C(S, G_2)$, there exists a model $f\in \mathcal{F}_{\text{graph}}$ such that $f(G_1)\ne f(G_2)$.
\end{definition}
\begin{definition}[Node-level counting]
Let $\mathcal{G}$ be the set of all graphs, $\mathcal{V}$ be the space of nodes, and $\mathcal{F}_{\text{node}}$ be a function class over node-graph tuples, i.e., $f: \mathcal{V}\times \mathcal{G}\to \mathbb{R}$ for $f\in \mathcal{F}_{\text{node}}$. We say $\mathcal{F}_{\text{node}}$ can count a substructure $S$ at node level, if for all pairs of node-graph tuples $(i_1, G_1), (i_2,G_2)$ satisfying $C(S, i_1, G_1)\ne C(S, i_2, G_2)$, there exists a model $f\in \mathcal{F}_{\text{node}}$ such that $f(i_1, G_1)\ne f(i_2, G_2)$.
\end{definition}
It is worth noticing that node-level counting requires stronger approximation power than graph-level counting. This is because the number of substructures of a graph is determined by the number of substructures of nodes, e.g., $C(\text{3-cycle}, G)=\frac{1}{3}\sum_{i\in V}C(\text{3-cycle}, i, G)$ and $C(\text{3-path}, G)=\sum_{i\in V}C(\text{3-path}, i, G)$. Therefore, being able to count a substructure at node level implies the same power at graph level, but the opposite is not true.


\section{Counting power of MPNNs and Subgraph MPNNs}
\subsection{Counting power of MPNNs}
Message passing Neural Networks (MPNNs) are a class of GNNs that updates node representations by iteratively aggregating information from neighbors~\cite{gilmer2017neural}. Concretely, let $h^{(t)}_i$ be the representation of node $i$ at iteration $t$. MPNNs update node representations using
\begin{equation}
    \forall i\in V,\quad h^{(t+1)}_i = U_t\left(h^{(t)}_i, \sum_{j\in N(i)}m_{i,j}\right), \text{ where }m_{i,j}= M_t\left(h^{(t)}_i,h^{(t)}_j, e_{i,j}\right).
\end{equation}
Here $U_t$ and $M_t$ are two learnable functions shared across nodes. Node representations are initialized by node attributes: $h^{(0)}_i=x_i$, or $1$ when no node attributes are available. After $T$ iterations, the final node representations $h_i\triangleq h^{(T)}_i$ are passed into a readout function to obtain a graph representation:
\begin{equation}
    h_G = R_{\text{graph}}\left(\{h_i|i\in V\}\right).
\end{equation}
 It is known that MPNNs' power of counting substructures is poor: MPNNs cannot count any cycles or paths longer than 2.
\begin{remark}
MPNNs cannot count any cycles at graph level (using $h_G$), and cannot count more-than-$2$-paths at graph level (using $h_G$). This can be easily seen by considering two graphs: $G_1$ is two unconnected $L$-cycles and $G_2$ is a $2L$-cycle. Any MPNN cannot distinguish them, but $G_1$ and $G_2$ have different numbers of $L$-cycles and $L$-paths. See Appendix \ref{app-mpnn-cp} for detailed discussions.
\end{remark}

\subsection{Counting power of Subgraph MPNNs}
Subgraph GNNs is to factorize a graph into subgraphs by some pre-defined strategies and aggregate subgraph representations into graph representations. Particularly, a node-based strategy means each subgraph
$G_i=(V_i, E_i)$ is tied with a corresponding node $i$. The subgraph $G_i$ is usually called the rooted subgraph of root node $i$. The initial node features of node $j$ in rooted subgraph $G_i$ is $x_{i,j}\triangleq x_j\oplus z_{i,j}$, where $\oplus$ denotes concatenation, $x_j$ denotes the raw node attributes and $z_{i,j}$ are some hand-crafted node labeling. We formally define \textbf{Subgraph MPNNs}, a concrete implementation of Subgraph GNNs, including several popular node-based strategies.
\begin{definition}[Subgraph MPNNs]
Subgraph MPNNs are a subset of Subgraph GNNs which use a combination of node-based strategies listed below:
\begin{itemize}[leftmargin=15pt, itemsep=2pt, parsep=2pt, partopsep=1pt, topsep=-3pt]
    \item Subgraph extraction: \textbf{node deletion} $(V_i, E_i)=(V\backslash\{i\}, E\backslash\{(i,j)|j\in N(i)\})$ or \textbf{$\bm{K}$-hop ego-network} $(V_i, E_i)=\text{EGO}_K(i,V,E)$, where $\text{EGO}_K(i,V,E)$ is the subgraph induced by nodes within $k$ hops from $i$;
    \item Node labeling: \textbf{identity labeling} $z_{i,j}=\mathbbm{1}_{i=j}$ or \textbf{shortest path distance} $z_{i,j}=\text{spd}(i,j)$;
\end{itemize}
and use MPNNs as base GNNs to encode subgraph representations (see Equation (\ref{subgraph_mpnn_0})). Here $\mathbbm{1}$ is used to denote indicator function.
\label{subgraph def}
\end{definition}
Concretely, let $h^{(t)}_{i,j}$ be the representation of node $j$ in subgraph $i$ at iteration $t$. Subgraph MPNNs follow a message passing scheme on each subgraph:
\begin{equation}
    \forall i\in V,\forall j\in V_i,\quad h^{(t+1)}_{i,j} = U_t\left(h^{(t)}_{i,j}, \sum_{k\in N_{i}(j)}m_{i,j,k}\right),\text{ where } m_{i,j,k}=M_t\left(h^{(t)}_{i,j}, h^{(t)}_{i,k}. e_{j,k}\right),
    \label{subgraph_mpnn_0}
\end{equation}
Here $N_i(j)\triangleq \{k\in V_i|(k,j)\in E_i\}$ represents node $j$'s neighbors in subgraph $G_i$, and $U_t, M_t$ are shared learnable functions. After $T$ iterations, the final representations $h_{i,j}\triangleq h^{(T)}_{i,j}$ will be passed to a node readout function to obtain node representations $h_i$:
\begin{equation}
    \forall i\in V,\quad h_i = R_{\text{node}}(\{h_{i,j}|j\in V_i\}).
    \label{subgraph_mpnn_1}
\end{equation}
Then $\{h_i\}_i$ are further passed to a graph readout function to obtain the graph representation $h_G$:
\begin{equation}
    h_G=R_{\text{graph}}(\{h_i|i\in V\}).
    \label{subgraph_mpnn_2}
\end{equation}

Note that Subgraph MPNNs defined in Definition~\ref{subgraph def} and Equations (\ref{subgraph_mpnn_0}), (\ref{subgraph_mpnn_1}), (\ref{subgraph_mpnn_2}) cover previous methods such as $(N-1)$-Reconstruction GNNs~\cite{cotta2021reconstruction},  ID-GNNs~\cite{you2021identity}, DS-GNNs~\cite{bevilacqua2021equivariant} and Nested GNNs~\cite{zhang2021nested},
though there exists other variants of Subgraph GNNs ~\cite{zhao2021stars, bevilacqua2021equivariant, frasca2022understanding, qian2022ordered} that do not fit into Subgraph MPNNs.
Subgraph MPNNs are strictly more powerful than MPNNs, because (1) Subgraph MPNNs reduce to $T$-layer MPNNs by taking $T$-hop ego-network without any node labeling, performing $T$-layers of message passing and using root node pooling $h_i=h_{i,i}$; and (2) Subgraph MPNNs can distinguish pairs of regular graphs which MPNNs cannot distinguish (e.g., graphs in Figure \ref{gnn-cex}).

Although Subgraph MPNNs are more expressive than MPNNs, it is still unclear if Subgraph MPNNs can count cycles and paths. 
Note that the proof in \cite{you2021identity} confuses paths with walks; see Appendix \ref{idgnn-wrong}.
Here we give an upper bound for Subgraph MPNNs' counting power at graph level.
\begin{proposition}
Subgraph MPNNs cannot count $8$-cycles and $8$-paths at graph level. This can be seen by a pair of strongly regular graphs, the 4x4 Rook’s graph and the Shrikhande graph. These two graphs have different numbers of $8$-cycles and $8$-paths~\cite{furer2017combinatorial, arvind2020weisfeiler}, but Subgraph MPNNs fail to distinguish them (see Appendix \ref{app-idgnn-proposition}). The counter-example also supports the conclusion that Subgraph MPNNs are weaker than 3-WL test~\cite{frasca2022understanding}.
\label{idgnn-proposition}
\end{proposition}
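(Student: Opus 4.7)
The plan is to establish two things about the pair $(R,S)$ (the $4\times 4$ Rook's graph and the Shrikhande graph): (i) $R$ and $S$ have different graph-level counts of $8$-cycles (and of $8$-paths), and (ii) no Subgraph MPNN in the sense of Definition~\ref{subgraph def} can produce different graph representations on $R$ and $S$. For (i) I would simply cite the known substructure-count calculations on these strongly regular graphs \cite{furer2017combinatorial, arvind2020weisfeiler}; no new work is needed there. The real content is (ii), and I would reduce it to a $1$-WL-indistinguishability statement at the level of rooted, labeled subgraphs, because MPNNs are upper-bounded by $1$-WL.

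Concretely, I would proceed case-by-case through Definition~\ref{subgraph def}. Since $R$ and $S$ are both SRG$(16,6,2,2)$ and vertex-transitive, it suffices to fix a single root in each graph and compare their rooted labeled subgraphs. For each extraction strategy (node deletion or $K$-hop ego-network) and each labeling (identity or shortest-path distance), I would exhibit a stable coloring that is identical for $R$ and $S$, which implies $1$-WL fails to distinguish the two rooted subgraphs. Because any MPNN satisfies $h_{i,j}^{(t)}$ is a function of its $1$-WL color, the multiset $\{h_i \mid i\in V(R)\}$ equals $\{h_j \mid j\in V(S)\}$ under any permutation-invariant readout, giving $h_R = h_S$.

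For the key WL computations: when $K\ge 2$, the ego-network is the entire graph with the root marked (distance labels collapse to $0/1/2$ in both graphs with the same counts $1/6/9$), and the SRG parameters $\lambda=\mu=2$ force the refinement $1, 6, 9 \to (1,\{1^{6}\}),\, (0,\{1,1^{2},2^{3}\})^{\times 6},\, (0,\{1^{2},2^{4}\})^{\times 9}$ to be the same in $R$ and $S$ at every iteration. The $K=1$ case is the delicate one: the induced subgraph on $N(i)$ is $K_3\sqcup K_3$ in $R$ but $C_6$ in $S$, so the ego-networks are genuinely non-isomorphic; however, $1$-WL cannot separate $K_3\sqcup K_3$ from $C_6$ (both are $2$-regular on $6$ vertices, all colors identical at every step), and adjoining a root of degree $6$ preserves this equivalence. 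The node-deletion case follows analogously: $G\setminus\{i\}$ in either graph is $6$-regular on $15$ vertices with the SPD-from-$i$ coloring $6,9$, and the same SRG arithmetic pins down identical refinement multisets.

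The main obstacle I anticipate is simply the bookkeeping of the $1$-WL refinement across all extraction/labeling combinations, and in particular justifying the $K=1$ case cleanly, since there one must rely on a WL-indistinguishability that is not a consequence of isomorphism of the subgraphs. After that, aggregating the multiset equality through $R_{\text{node}}$ and $R_{\text{graph}}$ is routine, and the proposition follows because distinguishability is a prerequisite for counting in the sense of the definitions above.
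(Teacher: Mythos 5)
Your proof is correct and takes essentially the same route as the paper's: both hinge on the fact that the SRG$(16,6,2,2)$ parameters $\lambda=\mu=2$ force the root-marked color refinement to stabilize at the identical partition $\{i\}$, $N(i)$, $V\setminus(\{i\}\cup N(i))$ with identical neighborhood multisets in both graphs, so every rooted-subgraph representation (hence every node and graph readout) coincides. The only differences are that the paper performs this computation once, for identity labeling on the full graph, and lets Lemma~\ref{id-most-powerful} subsume the remaining extraction/labeling combinations, whereas you carry out the casework explicitly (including the nice $K=1$ point that $K_3\sqcup K_3$ and $C_6$ are non-isomorphic yet $1$-WL-equivalent); one small slip --- after deleting the root the former neighbors have degree $5$, not $6$ --- does not affect your refinement argument.
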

Further, we give a \textbf{complete characterization} of Subgraph MPNNs' cycle counting power at node level, shown in the following theorems. 
\begin{theorem}
Subgraph MPNNs can count $3$-cycles, $4$-cycles, $2$-paths and $3$-paths at node level.
 \label{idgnn-theorem1}
\end{theorem}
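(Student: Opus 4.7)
The plan is constructive: I would fix an instantiation allowed by Definition~\ref{subgraph def} --- the $K$-hop ego-network extraction with $K=3$ together with identity labeling $z_{i,j}=\mathbbm{1}_{j=i}$ --- and exhibit concrete choices of $U_t,M_t$ and $R_{\text{node}}$ for which $h_i$ outputs each of the four target counts. The crucial structural fact is that the $3$-hop ego-network $G_i$ retains every quantity we will need: for any $k$ within distance $2$ of $i$, the common-neighbor count $|N(i)\cap N(k)|$ is the same in $G_i$ as in $G$ (common neighbors must lie in $N(i)\subseteq V_i$), and for any $j$ within distance $2$ of $i$ the full degree $\deg_G(j)$ is visible in $G_i$ because $N(j)\subseteq V_i$.

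First I would argue that with GIN-style (injective) aggregators and two message-passing rounds the representation $h_{i,j}^{(2)}$ injectively encodes the tuple $(\mathbbm{1}_{j=i},\,\mathbbm{1}_{j\in N(i)},\,\deg_G(j),\,|N(i)\cap N(j)|)$ for every $j$ in the relevant part of $V_i$: the root indicator from $z_{i,j}$ propagates into a ``neighbor-of-root'' indicator after step one, and aggregating those indicators over a node's neighborhood at step two produces the common-neighbor count, while the degree is read off as a multiset cardinality. Next I would write each count as a symmetric function of the multiset $\{h_{i,j}^{(2)}\}_{j\in V_i}$, realisable by $R_{\text{node}}$:
\begin{align*}
C(2\text{-path}, i, G) &= \sum_{j\in N(i)}\bigl(\deg_G(j)-1\bigr),\\
C(3\text{-cycle}, i, G) &= \tfrac12\sum_{j\in N(i)}\bigl|N(i)\cap N(j)\bigr|,\\
C(4\text{-cycle}, i, G) &= \sum_{k\in V_i\setminus\{i\}}\binom{|N(i)\cap N(k)|}{2}.
\end{align*}
The $4$-cycle identity holds because each $4$-cycle through $i$ is uniquely determined by its antipodal vertex $k$ together with an unordered pair of distinct common neighbors of $i$ and $k$; in particular only $k$ within distance $2$ of $i$ can contribute, so the sum can be restricted to $V_i$.

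The main obstacle will be $C(3\text{-path}, i, G)$, because naive length-$3$ walks from $i$ include walks that revisit $i$ (when the middle vertex or endpoint equals $i$) or revisit the first step (when the endpoint equals $j$). I would handle this by inclusion--exclusion: start from the ambient walk count $\sum_{j\in N(i)}\sum_{k\in N(j)}\deg_G(k)$ and subtract the bad events $l=j$, $k=i$, $l=i$, whose contributions are $\sum_{j\in N(i)}\deg_G(j)$, $\deg_G(i)^2$, and $2\,C(3\text{-cycle},i,G)$ respectively, plus a correction of $\deg_G(i)$ from the overlap $\{l=j,k=i\}$. Every term is either already encoded in the node features above or recoverable from $h_{i,i}^{(2)}$, and computing the ambient walk sum requires one additional message-passing round so that each $j\in N(i)$ can aggregate $\sum_{k\in N(j)}\deg_G(k)$ from its neighbors' iteration-$1$ degree features. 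Once these four scalars are produced at $h_i$, the node-level counting definition immediately gives countability at node level for each substructure, completing the argument.
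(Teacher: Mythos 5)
Your proof is correct, and it reaches the same conclusion by a noticeably different route for the harder half of the theorem. The paper's argument funnels everything through a single lemma: it constructs explicit message-passing layers that make $h_{i,j}$ equal the number of $2$-paths (resp.\ $3$-paths) from the root $i$ to $j$, and then reads off $2$-paths/$3$-cycles as $\sum_{j} P_2(i,j)$ and $\tfrac12\sum_{j\in N(i)}P_2(i,j)$, and $3$-paths/$4$-cycles as $\sum_{j}P_3(i,j)$ and $\tfrac12\sum_{j\in N(i)}P_3(i,j)$. Your treatment of $2$-paths and $3$-cycles is essentially the same (your $|N(i)\cap N(j)|$ \emph{is} $P_2(i,j)$), but for $4$-cycles you decompose by the antipodal vertex, $\sum_{k\ne i}\binom{|N(i)\cap N(k)|}{2}$, and for $3$-paths you do inclusion--exclusion on ambient $3$-walks rather than computing the per-endpoint count $P_3(i,j)$ at all; I checked your correction terms ($\sum_{j\in N(i)}\deg_G(j)$, $\deg_G(i)^2$, $2C(3\text{-cycle},i,G)$, plus $\deg_G(i)$ for the overlap $\{k=i, l=j\}$) and they are right, as is your observation that a $3$-hop ego-network preserves $\deg_G$ and common-neighbor counts for all vertices within distance $2$ of the root. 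What the paper's route buys is uniformity: the per-endpoint path counts $P_L(i,j)$ are exactly the template that is later upgraded to $P_4$ in the I$^2$-GNN analysis, so the lemma is reused. What your route buys is that every identity is a closed-form symmetric function of cheap local statistics (degrees and common-neighbor counts), so each formula can be verified independently without reasoning about the correctness of a multi-layer recursion with correction terms; the cost is that the argument is more ad hoc per substructure and does not generalize as cleanly to longer paths.
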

\begin{theorem}
Subgraph MPNNs cannot count more-than-$4$-cycles and more-than-$3$-paths at node level. 
\label{idgnn-theorem2}
\end{theorem}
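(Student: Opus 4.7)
The plan is to prove the impossibility by constructing, for each cycle length $L \geq 5$ and each path length $L \geq 4$, a counter-example pair of node-graph tuples $(i_1, G_1), (i_2, G_2)$ that differ in the relevant substructure count but are indistinguishable by every Subgraph MPNN (under any extraction and labeling combination from Definition~\ref{subgraph def}). The core reduction is to express Subgraph MPNN indistinguishability in local terms: the representation $h_i$ is determined by the 1-WL stable coloring of the rooted, labeled subgraph $(G_i, z_i)$, so it suffices to find pairs whose rooted labeled subgraphs are 1-WL equivalent under every admissible labeling. Moreover, if $G_1$ and $G_2$ are vertex-transitive, all node representations within each graph coincide, so indistinguishability at the graph level propagates to equality of \emph{every} pair of node representations across the two graphs.

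For the construction I would take two vertex-transitive, non-isomorphic graphs with identical local structure. The canonical choice is a pair of strongly regular graphs with the same parameters, for instance the $4\times 4$ Rook's graph and the Shrikhande graph (both $\operatorname{SRG}(16,6,2,2)$), which by Proposition~\ref{idgnn-proposition} are already known to be Subgraph-MPNN equivalent. Vertex-transitivity promotes any difference in global $L$-cycle or $L$-path counts into a per-node difference, so node-level separation follows from graph-level separation. For values of $L$ where this specific pair coincides in count (plausibly for small $L$ covered by parameter-determined counting identities), I would supplement with an auxiliary pair built from a local Cai--F\"urer--Immerman-style twist, which preserves rooted subgraphs up to isomorphism while altering the longer cycle and path spectra; the differing counts are verified by direct enumeration or from known identities in \cite{furer2017combinatorial, arvind2020weisfeiler}.

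The principal obstacle is verifying 1-WL equivalence of the rooted labeled subgraphs for all four extraction--labeling combinations simultaneously. For the strongly regular pair, both graphs have diameter $2$, so shortest-path-distance labeling collapses to a three-class partition (root, neighbor, non-neighbor) whose between-class adjacency structure is determined entirely by the parameters $(n, k, \lambda, \mu)$; a short iterated color-refinement argument then shows the stable colorings match. Node-deletion and $K$-hop ego-network extractions admit parallel case analyses. The argument is further supported by the known upper bound that Subgraph MPNNs of this form are no stronger than $3$-WL \cite{frasca2022understanding}, under which strongly regular graphs with equal parameters are indistinguishable; extending this to the node level uses the vertex-transitivity and multiset-readout argument from the first paragraph, completing the proof.
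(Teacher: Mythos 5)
There is a genuine gap, and it sits exactly at the cases the theorem is really about. Your primary counterexample, the $4\times 4$ Rook's graph versus the Shrikhande graph, cannot separate $5$-, $6$-, or $7$-cycle counts, nor $4$- through $7$-path counts: both are $\mathrm{SRG}(16,6,2,2)$, the $2$-FWL test cannot distinguish strongly regular graphs with equal parameters, and the $2$-FWL test \emph{can} count cycles and paths up to length $7$ \cite{furer2017combinatorial, arvind2020weisfeiler} --- so the two graphs necessarily have identical counts of all these substructures (the paper itself only invokes this pair for $8$-cycles and $8$-paths in Proposition~\ref{idgnn-proposition}). Thus vertex-transitivity buys you nothing here: it promotes a graph-level difference to a node-level one, but for $L\in\{5,6,7\}$ (cycles) and $L\in\{4,\dots,7\}$ (paths) there is no graph-level difference to promote. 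You acknowledge this and defer to an ``auxiliary pair built from a local Cai--F\"urer--Immerman-style twist,'' but that is where the entire burden of the proof lies and it is left unconstructed; worse, standard CFI-style pairs are only guaranteed to be $1$-WL equivalent, and Subgraph MPNNs are strictly stronger than $1$-WL (they achieve $100\%$ on the EXP dataset of $1$-WL-hard pairs), so there is no reason to expect such a twist to be Subgraph-MPNN-invisible. A proof also needs a counterexample for \emph{every} $L\ge 5$ and $L\ge 4$ respectively, not just $L=8$.

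The paper's proof avoids all of this with a single parametric construction (Figure~\ref{ngnn_cex}): graph $G_1$ is an apex node joined to every vertex of a $2L$-cycle, and $G_2$ is an apex joined to every vertex of two disjoint $L$-cycles. Color refinement on the rooted, labeled subgraphs stabilizes immediately (the apex is uniquely colored, every rim vertex has degree $3$ with one apex-neighbor and two rim-neighbors, and all shortest-path distances from any root agree), so every Subgraph MPNN assigns the two apex nodes identical representations; yet the first apex lies on $(L+2)$-cycles and starts $(L+1)$-paths while the second does not, covering all $L+2\ge 5$ and $L+1\ge 4$. If you want to salvage your route, you would need to replace the SRG pair with something like this family, or exhibit concrete Subgraph-MPNN-equivalent pairs with verified differing $5$-, $6$-, $7$-cycle counts --- the hardest and currently missing part of your argument.
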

We include the proofs in Appendix \ref{app-idgnn}. Theorems \ref{idgnn-theorem1} and \ref{idgnn-theorem2} indicate that although Subgraph MPNNs bring improvement over MPNNs, they still can only count up to $4$-cycles. This is far from our expectation to count important substructures like benzene rings. 

\section{I$^2$-GNNs}
\begin{figure}[t]
    \centering
    \includegraphics[scale=0.135]{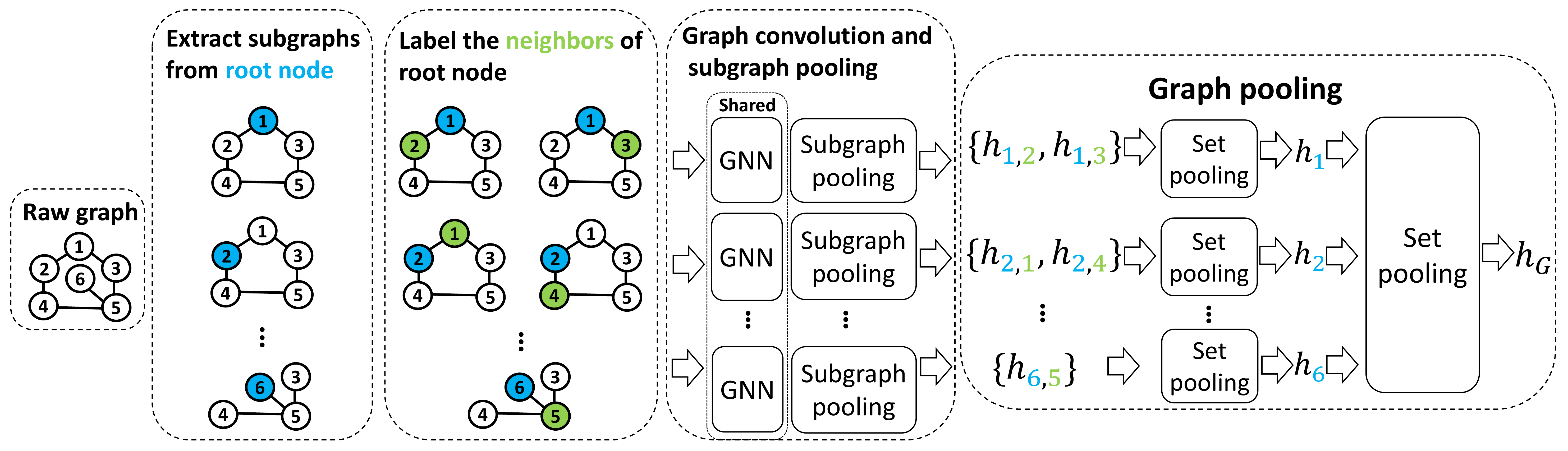}
    \caption{Architecture of I$^2$-GNN. It extracts a $K$-hop ego-network for each root node and iteratively assigns unique identifiers to both the root node (blue) and each of its neighbors (green).}
     \label{model}
\end{figure}

The limitation of Subgraph MPNNs motivates us to design a model with stronger counting power. A key observation is that 
assigning the root node a unique identifier can already express all node-based strategies listed in Definition~\ref{subgraph def}. See Lemma~\ref{id-most-powerful}. Therefore, we conjecture that assigning more than one node identifier simultaneously can further lift the representational power. However, naively labeling all pairs of nodes may suffer from poor scalability due to the square complexity, like other high-order GNNs. Fortunately, we notice that substructures like cycles and paths are highly localized. For instance, a 6-cycle must exist in the 3-hop ego-network of the root node. This implies that a local labeling method might be sufficient to boost the counting power. 

Given these observations, we adopt a localized identity labeling strategy and propose \textbf{I$^2$-GNNs}: except for the unique identifier of root node $i$, we further assign another unique identifier to one of the \textbf{neighbors} of the root node, called \emph{branching node} $j$. To preserve the permutation equivariance, the branching node should iterate over all neighbors of the root node. The resulting model essentially associates each connected 2-tuple $(i,j)$ with a subgraph $G_{i,j}$, in which $i,j$ are labeled with the corresponding identifiers. Compared to Subgraph MPNNs, it only increases the complexity by a factor of node degree. See Figure~\ref{model} for an illustration of the graph encoding process of I$^2$-GNNs.

Formally, for each root node $i\in V$, we first extract its $K$-hop ego-network with identity labeling. Then for \textbf{every} branching node $j\in N(i)$, we further copy the subgraph and assign an additional identifier for $j$. This results in a subgraph $G_{i,j}=(V_i, E_i)$ tied with $(i,j)$, where node attributes are augmented with \textbf{two identifiers}: $x_{i,j,k}\triangleq x_k\oplus \mathbbm{1}_{k=i}\oplus \mathbbm{1}_{k=j}$.
Let $h^{(t)}_{i,j,k}$ be the representation of node $k$ in subgraph $G_{i,j}$ at iteration $t$. I$^2$-GNNs use MPNNs in each subgraph: 
\begin{equation}
\begin{split}
    \forall i\in V,\forall &j\in N(i),\forall k\in V_{i},\\ &h^{(t+1)}_{i,j,k}=U_t\left(h^{(t)}_{i,j,k}, \sum_{l\in N_{i}(k)}m_{i,j,k,l}\right),\text{ where }m_{i,j,k,l} = M_t\left(h^{(t)}_{i,j,k}, h^{(t)}_{i,j,l}, e_{k,l}\right).
\end{split}
\label{i2gnn-forward}
\end{equation}
After $T$ iterations we obtain final representations $h_{i,j,k}\triangleq h^{(T)}_{i,j,k}$. We use an edge readout function:
\begin{equation}
    \forall i\in V,\forall j\in N(i),\quad h_{i,j} = R_{\text{edge}}\left(\{h_{i,j,k}|k\in V_i\}\right),
\end{equation}
a node readout function:
\begin{equation}
    \forall i\in V, \quad h_i = R_{\text{node}}\left(\{h_{i,j}|j\in N(i)\}\right),
    \label{i2gnn-node}
\end{equation}
and finally a graph readout function:
\begin{equation}
    h_G=R_{\text{graph}}\left(\{h_i|i\in V\}\right).
\end{equation}
Intuitively, I$^2$-GNNs improve the representational power by \textbf{breaking the symmetry of the neighbors of root nodes}. In the rooted subgraph $G_i$, the root node $i$ treats the messages from distinct branching nodes $j$ differently. The following discussion demonstrates that I$^2$-GNNs do have stronger discriminative power over MPNNs and Subgraph MPNNs.
\begin{proposition}
\label{i2gnn-proposition}
I$^2$-GNNs are \textbf{strictly more powerful} than Subgraph MPNNs. This can be seen via: (1) I$^2$-GNNs can reduce to Subgraph MPNNs by ignoring the branching node identifier. (2) I$^2$-GNNs can distinguish the 4x4 Rook’s graph and the Shrikhande graph, while Subgraph MPNNs and 3-WL test cannot. This also suggests that I$^2$-GNNs are \textbf{partially stronger} than the 3-WL test. See Appendix \ref{app-i2gnn-proposition} for more details. Moreover, I$^2$-GNNs' discriminative power is bounded by the 4-WL test as they can be implemented by 4-IGNs, which is analogous to the relation between Subgraph MPNNs and 3-IGNs~\cite{frasca2022understanding}.
\end{proposition}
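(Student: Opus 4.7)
The plan is to establish the four claims of the proposition in sequence: a reduction giving one direction of the comparison with Subgraph MPNNs, a concrete distinguishing argument on the Rook's/Shrikhande pair giving the strict direction (and simultaneously the incomparability with 3-WL), and finally an embedding of I$^2$-GNN computations into 4-IGNs for the upper bound.

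First I would prove that I$^2$-GNNs can simulate any Subgraph MPNN. The idea is to instantiate the message and update functions $M_t, U_t$ so that they depend only on the root identifier $\mathbbm{1}_{k=i}$ and ignore the branching identifier $\mathbbm{1}_{k=j}$; under this choice, the trajectory $h^{(t)}_{i,j,k}$ is identical across $j \in N(i)$ and coincides with the Subgraph MPNN trajectory $h^{(t)}_{i,k}$ on $G_i$. Then choosing $R_{\text{edge}}$ to be the same invariant readout as $R_{\text{node}}$ of the Subgraph MPNN and $R_{\text{node}}$ as mean recovers the Subgraph MPNN's node representation exactly, so every Subgraph MPNN is an I$^2$-GNN; combined with the next paragraph this gives strict inclusion.

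Second, the main obstacle is showing that I$^2$-GNNs separate the $4{\times}4$ Rook's graph $R$ and the Shrikhande graph $S$ (both strongly regular with parameters $(16,6,2,2)$). The key structural difference I will exploit is that in $R$ the two common neighbors of any adjacent pair $(u,v)$ are themselves adjacent (they sit inside a $K_4$), whereas in $S$ they are non-adjacent. Fix such an adjacent pair $(i,j)$. In the I$^2$-GNN subgraph $G_{i,j}$, the nodes $i$ and $j$ carry distinct identifiers, so after two rounds of message passing each node $k$ in the $K$-hop ego-network ($K\ge 2$) can encode the indicator of ``$k$ is a common neighbor of $i$ and $j$''; after one further round, each such common neighbor can aggregate messages from the other common neighbor precisely when they are adjacent. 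Using injective aggregators (which MPNNs realize in the Weisfeiler--Lehman sense), the edge-readout value $h_{i,j}$ therefore records, as a symmetric function, the adjacency status of the two common neighbors. This yields different multisets $\{h_{i,j} : (i,j) \in E\}$ for $R$ and $S$, hence different graph representations. Since it is classical that 3-WL (and therefore all Subgraph MPNNs, by the previously cited result) fails on $R$ vs.\ $S$, this simultaneously establishes strict dominance over Subgraph MPNNs and the ``partially stronger than 3-WL'' claim.

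Finally, to bound I$^2$-GNNs by the 4-WL test, I would exhibit an implementation of one I$^2$-GNN layer by a 4-IGN acting on tensors indexed by $V^4$, mirroring the Subgraph MPNN $\leftrightarrow$ 3-IGN correspondence of Frasca et al. Concretely, an I$^2$-GNN state is naturally indexed by triples $(i,j,k)$ with the first two coordinates being the (root, branching) pair and the third being the current node; this can be lifted to a symmetric tensor on $V^4$ by padding one coordinate, with initial features depending on the equality pattern of the four indices (encoding both identifiers) and on the adjacency of the relevant coordinate pair (encoding the edge of $G_{i,j}$). The neighborhood aggregation, edge readout, node readout, and graph readout are all equivariant/invariant polynomial operations on this tensor, hence are expressible by 4-IGN layers; since 4-IGNs are no more powerful than 4-WL, the upper bound follows. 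The main subtlety here is checking that the per-subgraph message passing, which involves conditioning on $j \in N(i)$, can be implemented equivariantly using adjacency-based masking inside the 4-IGN layer; this is routine but must be stated carefully.
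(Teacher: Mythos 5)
Your proposal follows essentially the same route as the paper: part (1) is the same reduction by ignoring the branching identifier, part (2) uses the identical counterexample pair and the identical distinguishing statistic (whether the two common neighbors of the identified adjacent pair $(i,j)$ are themselves adjacent, detected by a short MPNN inside $G_{i,j}$), and part (3) mirrors the paper's appeal to the 4-IGN implementation, which you actually flesh out in more detail than the paper does. One minor remark: your assignment (common neighbors adjacent in the Rook's graph, non-adjacent in the Shrikhande graph) is the correct one, whereas the paper's appendix states the two outcomes the other way around; this does not affect the conclusion since the two graphs differ on the statistic either way.
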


One may ask if the additional identifier of I$^2$-GNNs brings stronger counting power. The answer is yes. We note that in I$^2$-GNNs, the root node $i$ is aware of the first edge $(i,j)$ in each path it is counting, i.e., $(i\!\to\! j\!\to\!...)$, while for Subgraph MPNNs the first edge in paths is always anonymous, i.e.,  $(i\!\to\!...)$. As the first edge is determined, the counting of $L$-paths from $i$ is transformed into an easier one---the counting of $(L-1)$-paths from $j$. Since cycles are essentially a special kind of paths, the cycle counting power is boosted too. However, naively applying this argument only shows that I$^2$-GNNs can count up to $5$-cycles (one more than Subgraph MPNNs). Our main theorem below indicates that I$^2$-GNNs' cycle counting power is actually \textbf{boosted to at least $6$-cycles}.
\begin{theorem}
I$^2$-GNNs can count $3$, $4$, $5$ and $6$-cycles at node level. 
\label{i2gnn-c}
\end{theorem}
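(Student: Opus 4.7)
The plan is to reduce the counting of $L$-cycles through $i$ to the counting of simple $(L-1)$-paths between two marked endpoints in I$^2$-GNN's doubly-labeled subgraph, then to show that each such path count is computable by the MPNN running inside. Since every $L$-cycle through $i$ contains exactly two edges incident to $i$ (giving two orientations per cycle),
\[
C(L\text{-cycle}, i, G) \;=\; \tfrac{1}{2}\sum_{j\in N(i)} P_{L-1}(j,i,G_{i,j}),
\]
where $P_{L-1}(j,i,G_{i,j})$ is the number of simple $(L-1)$-paths from $j$ to $i$ in the ego-subgraph $G_{i,j}$; for $L\le 6$ and $K\ge 3$ all such paths lie inside $G_{i,j}$. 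Because $h_i$ is a permutation-invariant aggregation of $\{h_{i,j}\}_{j\in N(i)}$, it suffices to show that the MPNN on $G_{i,j}$ with identifiers on both $i$ and $j$ can produce $P_{L-1}(j,i,G_{i,j})$ for each $L\in\{3,4,5,6\}$.

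The natural strategy is inclusion--exclusion on walks: express
\[
P_{L-1}(j,i) \;=\; W_{L-1}(j,i) \;-\; \sum_{\pi}(-1)^{|\pi|+1}\,N_\pi(j,i),
\]
with $W_{L-1}(j,i)$ the number of $(L-1)$-walks from $j$ to $i$ and $\pi$ ranging over nonempty collapse patterns on the interior vertices of a walk. The walk count $W_{L-1}(j,i)$ is exactly what iterated sum-aggregation message passing computes: the identifier on $j$ seeds a single delta, $L-1$ rounds propagate it, and the identifier on $i$ lets the edge readout extract the entry landing at $i$. Each collapse pattern $\pi$ compresses the walk into a shorter closed structure threading through $i$, $j$, or an interior vertex whose role relative to the two labels is locally determined; the corresponding $N_\pi$ thereby decomposes into quantities such as $|N(i)|$, $|N(j)|$, $|N(i)\cap N(j)|$, $\mathbbm{1}_{(i,j)\in E}$, and counts of triangles, $4$-cycles, $3$-paths and $4$-paths anchored at $i$ or $j$ in $G_{i,j}$. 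By Theorem \ref{idgnn-theorem1} these are already computable by MPNN on singly-labeled subgraphs and hence, a fortiori, on the doubly-labeled $G_{i,j}$. The cases $L=3$ and $L=4$ follow immediately (for instance $P_2=W_2=|N(i)\cap N(j)|$), and $L=5$ proceeds from a short enumeration of repetition patterns on $4$-walks, each anchoring at one of the two labels.

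The main obstacle will be the bookkeeping for $L=6$: a $5$-walk from $j$ to $i$ admits a long list of collapse patterns (a single repeated interior vertex in various positions, two simultaneous coincidences, coincidences with the labels $i$ or $j$, interleaved coincidences). One must verify exhaustively that each resulting $N_\pi$ reorganizes --- possibly after changing anchor from a single vertex to an edge, or to a pair of vertices at prescribed distance from the two labels --- into a polynomial in quantities the doubly-labeled MPNN can already compute. Patterns that at first glance appear to demand substructure counts at an unmarked vertex must be re-anchored through $i$ or $j$ to sidestep this limitation; checking that this re-anchoring succeeds for every pattern is the technical heart of the argument, and is exactly what draws the line between $L=6$ (where such a re-anchoring is always available) and $L=7$ (where it is not).
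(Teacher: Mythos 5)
Your reduction $C(L\text{-cycle},i,G)=\tfrac12\sum_{j\in N(i)}P_{L-1}(j,i,G_{i,j})$ is sound, and for $L\le 5$ your plan essentially coincides with the paper's: its Lemma~\ref{i2gnn-lemma} counts simple $3$-paths from $j$ to $k$ avoiding $i$ by exactly the kind of small walk-correction you describe, and closing at $k\in N(i)$ gives the $5$-cycles. The problem is $L=6$. There your entire argument rests on the claim that the number of \emph{simple} $5$-paths from $j$ to $i$ (equivalently, $6$-cycles through the edge $(i,j)$) can be recovered from $5$-walk counts by an inclusion--exclusion whose every correction term $N_\pi$ is computable by the doubly-labeled MPNN. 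You do not enumerate the collapse patterns, you do not exhibit the re-anchoring for any of the nontrivial ones, and you explicitly concede that ``checking that this re-anchoring succeeds for every pattern is the technical heart of the argument.'' That heart is missing: as written, the $L=6$ case is an assertion, not a proof, and it is precisely the case the theorem is about (everything up to $5$-cycles needs only one extra identifier beyond Subgraph MPNNs). Your closing claim that the same re-anchoring demarcates $L=6$ from $L=7$ is likewise unsupported --- the paper proves no upper bound of this kind for I$^2$-GNNs.

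It is also worth noting that the paper deliberately avoids the object you are trying to compute. It never counts simple $5$-paths between the two marked nodes; instead it splits a $6$-cycle at the vertex $k$ opposite to $i$, writing $\#_0(i)=\sum_k P_4(i,k)P_2(i,k)$ (where $P_4(i,k)=\sum_{j\in N(i)}P_4(i\!\to\! j\!\to\!\cdots\!\to\! k)$ is supplied by Lemma~\ref{i2gnn-lemma}), and then subtracts exactly three explicitly identified degenerate patterns, $C_6(i)=\#_0(i)-\#_1(i)-\#_2(i)-\#_3(i)$, each of which is shown term-by-term to be expressible by the doubly-labeled MPNN (the $\#_2$ case already requires a further auxiliary identity and a sub-pattern $\#_4$). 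That finite, concrete case analysis is the actual content of the theorem's hardest part, and it is what your proposal would need to supply --- either in the paper's product form or in your walk form --- before it can be called a proof.
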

We include all proofs in Appendix~\ref{app-id2gnn}. Theorem~\ref{i2gnn-c} provides a lower bound of I$^2$-GNNs' cycle counting power. Recall that the cycle counting power of the 3-WL test is upper bounded by 7-cycles, while the linear-time I$^2$-GNNs can already approach that. The significance of Theorem \ref{i2gnn-c} is that it indicates the feasibility of using a \textbf{local and scalable} model to encode cycle substructures, rather than applying global algorithms such as $k$-WL test or relational pooling with exponential cost.

Below we also show some positive results of counting other substructures with I$^2$-GNNs.
\begin{theorem}
 I$^2$-GNNs can count $3$ and $4$-paths at node level. 
\label{i2gnn-p}
\end{theorem}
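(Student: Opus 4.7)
The plan is to treat the two cases separately, exploiting the fact that I$^2$-GNNs can both reduce to Subgraph MPNNs (by ignoring the branching identifier) and perform strictly finer reasoning through the additional marker on a neighbor of the root.

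For $3$-paths, I would simply combine Theorem~\ref{idgnn-theorem1} with Proposition~\ref{i2gnn-proposition}: Theorem~\ref{idgnn-theorem1} states that Subgraph MPNNs already count $3$-paths at the node level, and Proposition~\ref{i2gnn-proposition} shows that I$^2$-GNNs subsume Subgraph MPNNs by dropping the branching-node identifier. Since any two node-graph tuples with different $3$-path counts are separated by some Subgraph MPNN, the same pair is separated by the I$^2$-GNN that implements it, so I$^2$-GNNs count $3$-paths at the node level.

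For $4$-paths, I would use a first-edge decomposition. Every $4$-path starting at $i$ has the form $i\to j\to v_2\to v_3\to v_4$ with $j\in N(i)$ and $v_2,v_3,v_4$ pairwise distinct and distinct from $i$, and grouping by the first neighbor $j$ gives
\begin{equation*}
C(4\text{-path}, i, G) \;=\; \sum_{j\in N(i)} C\bigl(3\text{-path},\, j,\, G-\{i\}\bigr),
\end{equation*}
where the right-hand side only counts $3$-paths from $j$ that avoid $i$. Choosing the I$^2$-GNN hop-size $K\ge 4$ so that every such $3$-path lies inside the rooted subgraph $G_{i,j}$, it is enough to show that the branching-node representation $h_{i,j,j}$ encodes $C(3\text{-path},j,G_{i,j}-\{i\})$; the outer sum over $j\in N(i)$ is then realized by $R_{\text{node}}$.

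To obtain the inner count, I would simulate, inside the I$^2$-GNN on $G_{i,j}$, a Subgraph MPNN rooted at $j$ on the vertex-deleted subgraph $G_{i,j}-\{i\}$. The identifier $\mathbbm{1}_{k=i}$ attached to every node lets the learnable functions $M_t,U_t$ suppress any message to or from the $i$-labeled vertex, effectively deleting $i$, while the identifier $\mathbbm{1}_{k=j}$ plays precisely the role of the root identifier used by a Subgraph MPNN rooted at $j$. Applying Theorem~\ref{idgnn-theorem1} to this simulated computation then ensures that the representation at $j$ encodes $C(3\text{-path},j,G_{i,j}-\{i\})$, as needed. The main obstacle will be this simulation step: I must verify that the blocking gadget on messages involving the $i$-labeled node truly decouples the computation at the remaining vertices from $i$, and that the $3$-path-counting construction underlying Theorem~\ref{idgnn-theorem1} transfers verbatim when the root identifier is replaced by the branching-node identifier. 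Once that bookkeeping is settled, the first-edge decomposition assembles the pieces into node-level $4$-path counting.
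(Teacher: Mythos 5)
Your proposal is correct and takes essentially the same route as the paper: $3$-paths follow from Theorem~\ref{idgnn-theorem1} plus the reduction of I$^2$-GNNs to Subgraph MPNNs, and $4$-paths are handled by exactly the first-edge decomposition the paper uses --- its Lemma~\ref{i2gnn-lemma} computes, via the two identifiers, the number of $3$-paths from $j$ to $k$ that avoid $i$ (equivalently $4$-paths $i\to j\to\cdots\to k$) and then sums over $j\in N(i)$ and $k\in V$. The one detail to adjust is that you should not try to concentrate $C(3\text{-path},j,G_{i,j}-\{i\})$ into the single representation $h_{i,j,j}$ --- routing the per-endpoint counts back to $j$ by further message passing would overcount endpoints reachable through several intermediate neighbors --- but instead keep the per-endpoint values $h_{i,j,k}$ and let the edge readout $R_{\text{edge}}$ perform the sum over $k\in V_i$, exactly as the paper (and the proof of Theorem~\ref{idgnn-theorem1}) does with its readout.
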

\begin{theorem}
I$^2$-GNNs can count all connected graphlets with size 3 or 4 at node level. 
\label{i2gnn-k}
\end{theorem}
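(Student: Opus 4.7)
The plan is to enumerate the connected graphlets on at most four vertices and, for each, either invoke a previous theorem or give an explicit decomposition of its node-level count into quantities I$^2$-GNNs can already encode. The two connected $3$-vertex graphlets (triangle and wedge) are dispatched by Theorem~\ref{idgnn-theorem1} together with Proposition~\ref{i2gnn-proposition}. Among the six connected $4$-vertex graphlets, the $3$-path $P_4$ and the $4$-cycle $C_4$ are handled by Theorems~\ref{i2gnn-p} and~\ref{i2gnn-c}. The remaining four cases are the star $K_{1,3}$, the paw (triangle with a pendant edge), the diamond ($K_4$ minus an edge), and the clique $K_4$.

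The key expressive primitive I would establish first is that the edge-level embedding $h_{i,j}$ of an I$^2$-GNN with a $K$-hop ego-network ($K\ge 2$) can distinguish any isomorphism type of the $2$-hop ego-network around $i$ in which both $i$ and $j$ are individually identified. This follows from the same sum-aggregation universality argument used in Theorem~\ref{i2gnn-c}: the two distinct identifiers break neighbor-symmetry so that after enough rounds the inner MPNN resolves each vertex's labeled structural role (whether it is $i$, whether it is $j$, whether it is adjacent to $i$ or to $j$, etc.), and $R_{\text{edge}}$ aggregates the resulting labeled multiset. In particular, $h_{i,j}$ can encode $\deg(i)$ and $\deg(j)$, the number $t(i,j)$ of common neighbors of $i$ and $j$ (equivalently triangles through the edge $(i,j)$), the count $n_{K_4}(i,j)$ of $K_4$'s through $(i,j)$ (common neighbors of $i,j$ that are themselves adjacent), and the triangle count $\tau(j)$ through the branching vertex $j$ (since $N(j)$ and the edges inside $N(j)$ all lie within the $2$-hop ball of $i$).

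Given these primitives, the node-level counts admit the following decompositions, each obtained by case analysis on which position $i$ occupies in the graphlet:
\begin{align*}
C(K_{1,3}, i, G) &= \binom{\deg(i)}{3} + \sum_{j\in N(i)} \binom{\deg(j)-1}{2}, \\
C(K_4, i, G) &= \tfrac{1}{3}\sum_{j\in N(i)} n_{K_4}(i,j), \\
C(\mathrm{diamond}, i, G) &= \sum_{j\in N(i)} \binom{t(i,j)}{2} + \tfrac{1}{2}\sum_{j\in N(i)}\sum_{k\in N(i)\cap N(j)} \bigl(t(j,k)-1\bigr), \\
C(\mathrm{paw}, i, G) &= \tau(i)\bigl(\deg(i)-2\bigr) + \sum_{j\in N(i)} t(i,j)\bigl(\deg(j)-2\bigr) + \sum_{j\in N(i)} \bigl(\tau(j)-t(i,j)\bigr),
\end{align*}
where $\tau(v)$ denotes the number of triangles through $v$. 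The diamond split corresponds to $i$ being a degree-$3$ spine vertex versus a degree-$2$ leaf, and the paw split corresponds to $i$ being the degree-$3$, degree-$2$, or degree-$1$ vertex. Each summand is either a local polynomial in degree information at $i$ or a doubly-anchored quantity extractable from $h_{i,j}$; the node readout $R_{\text{node}}$ realizes the outer sums over $j\in N(i)$. I would verify each identity on canonical small examples (the graphlets themselves and $K_4$) as a sanity check.

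The main obstacle is the rigorous justification of the expressive primitive above, namely that the inner MPNN on the dual-labeled subgraph $G_{i,j}$ can extract doubly-anchored quantities such as $t(i,j)$, $n_{K_4}(i,j)$, and $\tau(j)$. The argument is essentially the same as in Theorem~\ref{i2gnn-c}, transported from $6$-cycle patterns to $4$-vertex patterns: distinct identifiers at $i$ and $j$ let the inner MPNN canonicalize each vertex's labeled structural role within the $2$-hop ball, after which standard multiset universality (as in \cite{xu2018powerful}) converts indicator counts into polynomial functionals passed through $R_{\text{edge}}$, $R_{\text{node}}$, and $R_{\text{graph}}$. Once this subroutine is established, the closed forms above reduce graphlet counting to finite polynomial algebra over I$^2$-GNN-accessible quantities, completing the proof.
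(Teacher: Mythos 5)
Your proposal is correct in substance and follows essentially the same route as the paper's proof: both reduce each size-$3$ or size-$4$ graphlet to edge-anchored local statistics that the pair of identifiers makes computable inside $G_{i,j}$, and then assemble the count through $R_{\text{edge}}$ and $R_{\text{node}}$. The differences are mostly presentational. The paper writes an explicit short message-passing scheme per graphlet (for $K_4$ it marks the common neighbors of $i$ and $j$ and counts edges among them, which is exactly your $n_{K_4}(i,j)$ up to normalization; the chordal-cycle and tailed-triangle constructions are likewise special cases of your primitives), and it defines node-level graphlet counting position-specifically (node $i$ sits at the marked position in Figure~\ref{graphlet}), whereas you sum over all positions of $i$; your per-position terms recover the paper's quantities in any case, and you additionally handle the star $K_{1,3}$, which the paper silently omits. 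Your closed-form identities for the star, paw, diamond and $K_4$ are correct for non-induced subgraph occurrences, which matches the convention the paper implicitly adopts.

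The one point you should repair is the justification of your ``expressive primitive.'' As stated --- that $h_{i,j}$ can distinguish \emph{any} isomorphism type of the $2$-hop ego-network with $i$ and $j$ individually identified --- the claim is false: an inner MPNN on a doubly-rooted graph is still bounded by a (rooted) WL-style refinement and cannot canonicalize every vertex's structural role, so ``resolve each vertex's labeled role and aggregate the multiset'' does not yield full isomorphism-type discrimination. Fortunately your argument never needs this strength. Each quantity you actually use --- $\deg(i)$, $\deg(j)$, $t(i,j)$, $n_{K_4}(i,j)$, $\tau(j)$, and aggregates such as $\sum_{k\in N(i)\cap N(j)}\bigl(t(j,k)-1\bigr)$ --- is computable by two or three explicit message-passing layers (mark $N(i)$ and $N(j)$ via the identifiers, then count marked neighbors or edges among marked nodes) followed by the edge readout, exactly as the paper demonstrates for its cases, and all relevant edges lie inside the $K$-hop ego-network once $K\ge 2$. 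Replace the universality appeal with those direct constructions and the proof is complete.
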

See the definition of node-level graphlets counting in Appendix Figure \ref{graphlet}. Note that connected graphlets with size 4 includes $4$-cliques, a substructure that even 3-WL test cannot count~\cite{furer2017combinatorial}. This again verifies I$^2$-GNNs' partially stronger power than 3-WL.
\section{Related Works}
\textbf{WL-based GNNs.} \cite{xu2018powerful} and \cite{morris2019weisfeiler} showed that MPNNs are at most as powerful as the WL test in terms of distinguishing non-isomorphic graphs, which motivates researchers to propose provably more powerful GNN models. 
One line of research is to extend MPNNs to simulate high-dimensional WL test by performing computations on $k$-tuples. On one hand, \cite{morris2019weisfeiler} proposes $k$-dimensional GNNs called $k$-GNNs, which apply message passing between $k$-tuples and can be seen as a local and neural variant of $k$-WL test. \cite{morris2020weisfeiler} further extend $k$-GNNs by aggregating only from local neighbors, obtaining a scalable and strictly more powerful model. Despite the good scalability due to the localized nature, it is still unclear if these models can achieve the discriminative power of the 3-WL test. On the other hand,  \cite{maron2019provably} proposed a tensor-based model that is provably as powerful as the 3-WL test. The downside is its cubic complexity.

\textbf{Permutational invariant and equivariant GNNs.} There is a line of research studying GNNs from the perspective of equivariance to permutation group. 
In this regard, \cite{maron2018invariant} proposed Invariant Graph Networks (IGNs), which apply permutational equivariant linear layers with point-wise nonlinearity to the input adjacency matrix. $k$-IGNs (using at most $k$-order tensors) are shown to have equivalent discriminative power to $k$-WL test~\cite{maron2019provably, geerts2020expressive, azizian2020expressive}. With sufficiently large $k$, $k$-IGNs can reach universal approximation to any invariant functions on graphs~\cite{maron2019universality, azizian2020expressive}. Relational pooling \cite{murphy2019relational} utilizes the Reynold operator, a linear map that projects arbitrary functions to the invariant function space by averaging over the permutation group. Relational pooling is shown to be a universal approximator for invariant functions, but it does not scale as the size of the permutation group grows factorially with graph size.

\textbf{Subgraph GNNs.} Another line of research aims at breaking the limit of MPNNs by encoding node representations from subgraphs rather than subtrees. In this respect, \cite{abu2019mixhop, tahmasebi2020counting, sandfelder2021ego, nikolentzos2020k} study convolutions on $K$-hop neighbors instead of only $1$-hop neighbors. See \cite{feng2022powerful} for an analysis of their power. \cite{wijesinghe2021new} weights the message passing based on the subgraph overlap. On the other hand, a recent and popular class of GNNs dubbed Subgraph GNNs views a graph as a bag of subgraphs. 
Subgraph extraction policies vary among these works, with possible options including node and/or edge deletion~\cite{cotta2021reconstruction, bevilacqua2021equivariant, papp2021dropgnn}, node identity augmentation~\cite{you2021identity} and ego-network extraction~\cite{zhang2021nested, zhao2021stars}. The base GNNs are also flexible, varying from MPNNs to relational pooling~\cite{chen2020can}.  A contemporary work~\cite{frasca2022understanding} further explores the theoretical upper bound of the representational power of Subgraph MPNNs, showing that all existing Subgraph MPNNs can be implemented by 3-IGNs and thus are weaker than the 3-WL test. 
Labeling trick~\cite{zhang2021labeling, zhang2018link} uses multi-node labeling for muti-node task, but it cannot directly generate equivariant single-node/graph representation.


\textbf{Feature-augmented GNNs.} Some works improve expressiveness by augmenting node features.   \cite{bouritsas2022improving, barcelo2021graph} augment the initial node features with hand-crafted structural information encoded in the surrounding subgraphs, and \cite{dwivedi2021graph} adopts positional encoding and devises message passing for positional embedding. \cite{loukas2019graph, loukas2020hard} instead study the representational power in case where all nodes have uniquely identified features.

\textbf{GNNs' power of counting graph substructures.} 
The ability to count graph substructures is another perspective of studying GNNs' representational power. 
Many previous works characterized the power to count substructures for the WL test and variants of GNNs. \cite{furer2017combinatorial, arvind2020weisfeiler} give a complete description of 1-WL combinatorial invariants (i.e., all substructures that can be counted by the 1-WL test) and a partial result for 2-FWL. Particularly, the power of counting cycles and paths of the 2-FWL test is fully understood: the 2-FWL test can and only can count up to 7-cycles and 7-paths. \cite{chen2020can} study the counting power of induced subgraph counting and give general results of $k$-WL test, but the bound is loose. \cite{tahmasebi2020counting} study the counting power of Recursive Neighborhood Pooling GNNs and give the complexity lower bound of counting substructures for generic algorithms.


\section{Experiments}
This section aims to validate our theoretical results and study I$^2$-GNNs' empirical performance (\href{https://github.com/GraphPKU/I2GNN}{https://github.com/GraphPKU/I2GNN}.). Particularly, we focus on the following questions:\\
 \textbf{Q1}: Does the discriminative power of I$^2$-GNNs increase compared to Subgraph MPNNs?\\
    \textbf{Q2}: Can I$^2$-GNNs reach their theoretical counting power? \\
    \textbf{Q3}: How do I$^2$-GNNs perform compared to MPNNs, Subgraph MPNNs and other state-of-the-art GNN models on open benchmarks for graphs?

Note that in the experiments, shortest path distance (SPD) labeling~\cite{li2020distance} is uniformly used as an alternative to identity labeling in I$^2$-GNNs. SPD labeling is also used in Nested GNNs~\cite{zhang2021nested} and GNNAK~\cite{zhao2021stars}. Theoretically SPD labeling has the same representational power as identity labeling. We study the effects of SPD labeling in Appendix \ref{app-ab}. 

\subsection{Discriminating non-isomorphic graphs}
\textbf{Datasets.} To answer Q1, we study the discriminative power on two synthetic datasets: (1) EXP~\cite{abboud2020surprising}, containing 600 pairs of non-isomorphic graphs that cannot be distinguished by the 1-WL/2-WL test; (2) SR25~\cite{balcilar2021breaking}, containing 150 pairs of non-isomorphic strongly regular graphs that 3-WL fails to distinguish. We follow the evaluation process in \cite{balcilar2021breaking} that compares the graph representations and reports successful distinguishing cases. 


\begin{table}[h]
\begin{minipage}[h]{0.35\linewidth}
\vspace{-5pt}\caption{Accuracy on EXP/SR25.} 
\renewcommand\arraystretch{1.25}
\centering
    \vspace{-8pt}
\begin{tabular}{lcc}
 \Xhline{2.5\arrayrulewidth}
Method  & EXP & SR25  \\ \Xhline{1.7\arrayrulewidth}
Base GNN &0\% & 0\% \\
ID-GNN   &100\% & 0\%\\
NGNN    &100\% & 0\%\\
GNNAK+  &100\% & 0\%\\
PPGN   &100\% & 0\% \\\hline
I$^2$-GNN &100\% & \textbf{100\%} \\
 \Xhline{2.5\arrayrulewidth}
\end{tabular}
\label{exp_dis}
\end{minipage}
\begin{minipage}[h]{0.65\linewidth}
\textbf{Models.} Adopting GIN~\cite{xu2018powerful} as the base GNN, we compare I$^2$-GNNs to ID-GNNs~\cite{you2021identity}, Nested GNNs (NGNNs)~\cite{zhang2021nested} and GNNAK+~\cite{zhao2021stars}. These Subgraph GNNs are known to be strictly stronger than 1-WL but weaker than 3-WL. We also consider PPGN~\cite{maron2019provably} known to be as powerful as 3-WL.  

\textbf{Results.} Table \ref{exp_dis} shows that all models except I$^2$-GNN fail the SR25 dataset with 0\% accuracy. In contrast, I$^2$-GNN achieves a 100\% accuracy. It supports Proposition \ref{i2gnn-proposition} that I$^2$-GNNs is strictly stronger than Subgraph MPNNs and partially stronger than the 3-WL test.
\end{minipage}
\end{table}

\subsection{Graph substructure counting}
\textbf{Datasets.} To answer Q2, we adopt the synthetic dataset from \cite{zhao2021stars} and a bioactive molecules dataset named ChEMBL~\cite{gaulton2012chembl} to perform node-level counting tasks. The synthetic dataset contains 5,000 graphs generated from different distributions. The training/validation/test spliting is 0.3/0.2/0.5. The ChEMBL dataset is filtered to contain 16,200 molecules with low fingerprint similarity. The task is to perform node-level regression on the number of 3-cycles, 4-cycles, 5-cycles, 6-cycles, tailed triangles, chordal cycles, 4-cliques, 4-paths and triangle-rectangles respectively (continuous outputs to approximate discrete labels). See Appendix \ref{app-ex-count} for more details about definitions of these graphlets, dataset preprocessing, model implementation and experiment setup.

\textbf{Models.} We compare with other Subgraph GNNs, including ID-GNNs, NGNNs and GNNAK+. Besides, PPGNs, which theoretically can count up to 7-cycles, are also included for comparison. We use a 4-layer GatedGNN~\cite{li2015gated} as the base GNN to build ID-GNNs, NGNNs and I$^2$-GNNs. GNNAK+ is implemented using 4-layer GNNAK+ layers with 1-layer GatedGCNs as inner base GNNs. PPGNs are realized with 4 blocks of PPGN layers. 

\vspace{10pt}
\begin{table}[t]
  \vspace{-5pt}\caption{Normalized MAE results of counting cycles at node level on synthetic and ChEMBL dataset. The colored cell means an error less than $0.01$ (synthetic) or $0.001$ (ChEMBL).} 
\renewcommand\arraystretch{1.25}
\centering
    \vspace{-8pt}
\resizebox{0.8\textwidth}{!}{
\begin{tabular}{lcccccccc}
 \Xhline{2.5\arrayrulewidth}
\multirow{2}{*}{Method} & \multicolumn{4}{c}{Synthetic (norm. MAE)} & \multicolumn{3}{c}{ChEMBL (norm. MAE)} \\ 
\cmidrule(r){2-5}\cmidrule(r){6-9}
& 3-Cycle     & 4-Cycle  & 5-Cycle & 6-Cycle  & 3-Cycle &4-Cycle & 5-Cycle     & 6-Cycle     \\ \Xhline{1.7\arrayrulewidth}
Base GNN  & 0.3515 & 0.2742   &  0.2088   & 0.1555 & 0.1326 & 0.0780 & 0.4307 & 0.4268\\
ID-GNN  & \cellcolor{LightYellow}0.0006 & \cellcolor{LightYellow} 0.0022 &0.0490 & 0.0495 & \cellcolor{LightYellow} 0.0001 & \cellcolor{LightYellow} 0.0008 & \cellcolor{LightYellow} 0.0006 & 0.0024\\
NGNN  & \cellcolor{LightYellow} 0.0003  & \cellcolor{LightYellow} 0.0013 & 0.0402 & 0.0439  & \cellcolor{LightYellow} 0.0001& \cellcolor{LightYellow} 0.0005 & \cellcolor{LightYellow} 0.0003 & 0.0053\\
GNNAK+  & \cellcolor{LightYellow} 0.0004&\cellcolor{LightYellow} 0.0041  &0.0133  &0.0238  & \cellcolor{LightYellow} 0.0001 &  0.0011 & \cellcolor{LightYellow} 0.0002 & \cellcolor{LightYellow} 0.0006\\
PPGN  & \cellcolor{LightYellow}  0.0003    &  \cellcolor{LightYellow} 0.0009  & \cellcolor{LightYellow} 0.0036 & \cellcolor{LightYellow} 0.0071    &\cellcolor{LightYellow}  0.0001  & 0.0169 & \cellcolor{LightYellow} 0.0001 & \cellcolor{LightYellow} 0.0007\\\hline
I$^2$-GNN  &\cellcolor{LightYellow} 0.0003  & \cellcolor{LightYellow} 0.0016   &\cellcolor{LightYellow} 0.0028 & \cellcolor{LightYellow} 0.0082  & \cellcolor{LightYellow} 0.0001 &\cellcolor{LightYellow} 0.0005 & \cellcolor{LightYellow} 0.0001 & \cellcolor{LightYellow} 0.0003\\

 \Xhline{2.5\arrayrulewidth}
\end{tabular}
}
\label{exp_count_cycle}
\end{table}
\vspace{-15pt}
\begin{table}[t]
  \vspace{-5pt}\caption{Normalized MAE results of counting graphlets at node level on synthetic dataset.} 
\renewcommand\arraystretch{1.25}
\centering
    \vspace{-8pt}
\resizebox{0.9\textwidth}{!}{
\begin{tabular}{lccccc}
 \Xhline{2.5\arrayrulewidth}
\multirow{2}{*}{Method} & \multicolumn{5}{c}{Synthetic (norm. MAE)}  \\ 
\cmidrule(r){2-6}
& Tailed Triangle & Chordal Cycle & 4-Clique & 4-Path & Triangle-Rectangle      \\ \Xhline{1.7\arrayrulewidth}
Base GNN & 0.3631 & 0.3114 & 0.1645 & 0.1592 & 0.2979 \\
ID-GNN  & 0.1053 & 0.0454 & 0.0026 & 0.0273 & 0.0628          \\
NGNN    &0.1044 & 0.0392&0.0045 & 0.0244 & 0.0729    \\
GNNAK+  & 0.0043&0.0112&0.0049&0.0075 &0.1311    \\
PPGN  & 0.0026 & 0.0015 & 0.1646 & 0.0041 & 0.0144\\\hline
I$^2$-GNN  & \textbf{0.0011} & \textbf{0.0010} & \textbf{0.0003} & \textbf{0.0041} & \textbf{0.0013} \\
 \Xhline{2.5\arrayrulewidth}
\end{tabular}
}
\label{exp_count_graphlet}
\end{table}

\textbf{Results.} We run the experiments with three different random seeds and report the average normalized test MAE (i.e. test MAE divided by label standard deviation) in Table \ref{exp_count_cycle} and \ref{exp_count_graphlet}. On both datasets, Subgraph MPNNs (NGNNs and ID-GNNs) and I$^2$-GNNs attain a relatively low error ($<0.01$) for counting of $3,4$-cycles, which is consistent with Theorems \ref{idgnn-theorem1} and \ref{i2gnn-c}. On the synthetic dataset, if we compare 5-cycles, 6-cycles to 3-cycles, 4-cycles.the MAE of Subgraph MPNNs get nearly 30 times greater. It supports Theorem \ref{idgnn-theorem2} that Subgraph MPNNs fail to count 5-cycles and 6-cycles at node level. GNNAK+, though not belonging to Subgraph MPNNs, also gets a 3$\sim$6 times greater MAE. In comparison, PPGN and I$^2$-GNNs still keep a stable MAE less than 0.01. On the ChEMBL dataset, the 6-cycle MAE of Subgraph MPNNs also amplifies by 5 times compared to 4-cycles MAE. In contrast, I$^2$-GNNs' MAE almost remains the same. These observations support Theorem \ref{i2gnn-c}.

\textbf{Ablation study.}
We study the impact of the key element---the additional identifier, on I$^2$-GNNs' counting power. The results can be found in Appendix \ref{app-ab}.

\subsection{Molecular properties prediction}

\begin{table*}[t]
  \vspace{-5pt}\caption{MAE results on QM9 (smaller the better).}
    
    \centering
     \vspace{-8pt}
     \resizebox{0.85\textwidth}{!}{
    \begin{tabular}{l|cccccc|c}
    \Xhline{2.5\arrayrulewidth}
        Target & 1-GNN &1-2-3-GNN&DTNN& Deep LRP & PPGN &NGNN & I$^2$-GNN\\ \hline
        $\mu$  &0.493 &0.476 &0.244 & 0.364& \textbf{0.231}&0.428 & 0.428\\
        $\alpha$  & 0.78&0.27&0.95 & 0.298& 0.382&0.29 & \textbf{0.230}\\
        $\varepsilon_{\text{homo}}$ &0.00321&0.00337&0.00388 & \textbf{0.00254}&0.00276& 0.00265 & 0.00261   \\
        $\varepsilon_{\text{lumo}}$ &0.00355& 0.00351&0.00512& 0.00277& 0.00287&0.00297 & \textbf{0.00267}\\ 
        $\Delta\varepsilon$ &0.0049 & 0.0048&0.0112& \textbf{0.00353}& 0.00406&0.0038 & 0.0038  \\ 
        $R^2$ &34.1 &22.9& 17.0& 19.3& \textbf{16.07}&20.5 & 18.64\\
        ZPVE &0.00124 &0.00019&0.00172 &0.00055&0.0064 &0.0002 & \textbf{0.00014}\\
        $U_0$ &2.32&\textbf{0.0427}&2.43 & 0.413& 0.234&0.295 &0.211 \\
        $U$ &2.08 &\textbf{0.111}&2.43 &0.413&0.234&0.361 &0.206 \\
        $H$ &2.23 &\textbf{0.0419}&2.43&0.413&0.229& 0.305 &0.269  \\
        $G$ &1.94 &\textbf{0.0469}&2.43&0.413& 0.238&0.489 &0.261  \\
        $C_v$ &0.27&0.0944&2.43&0.129&0.184&0.174 &\textbf{0.0730} \\
    \Xhline{2.5\arrayrulewidth}
    \end{tabular}
    }
    \label{exp-qm9}
\end{table*}
\begin{table*}[t]
  \vspace{-5pt}\caption{Four-runs MAE results on ZINC (smaller the better) and ten-runs AUC results on ogbg-molhiv (larger the better). The * indicates the model uses virtual node on ogbg-molhiv.}
    
    \centering
     \vspace{-8pt}
     \resizebox{0.85\textwidth}{!}{
    \begin{tabular}{lccc}
    \Xhline{2.5\arrayrulewidth}
        Method & ZINC-12K (MAE) & ZINC-250K (MAE) & ogbg-molhiv (AUC) \\ \hline
        GIN* & $0.163\!\pm\!0.004$&$0.088\!\pm\!0.002$ &$77.07\!\pm\!1.49$ \\
        PNA & $0.188\!\pm\!0.004$&-- &$79.05\!\pm\!1.32$  \\
        DGN & $0.168\!\pm\!0.003$ & --&$79.70\!\pm\!0.97$  \\
        HIMP &$0.151\!\pm\!0.006$ & $0.036\!\pm\! 0.002$&$78.80\!\pm\!0.82$  \\
        GSN & $0.115\!\pm\!0.012$&-- & $80.39\!\pm\!0.90$ \\
        Deep LRP &-- &-- &$77.19\!\pm\!1.40$  \\
        CIN-small & $0.094\!\pm\!0.004$& $0.044\!\pm\!0.003$& $80.05\!\pm\!1.04$ \\
        CIN & $\pmb{0.079}\!\pm\!0.006$& $\pmb{0.022}\!\pm\! 0.002$&$\pmb{80.94}\!\pm\!0.57$  \\ 
        Nested GIN* &$0.111\!\pm\!0.003$ & $0.029\!\pm \!0.001$&$78.34\!\pm\!1.86$  \\
        GNNAK+ & $0.080\!\pm\!0.001$ & -- & $79.61\!\pm\!1.19$ \\
        SUN (EGO) & $0.083\!\pm\!0.003$ & -- & $80.03\!\pm\!0.55$  \\\hline
        I$^2$-GNN & $0.083\!\pm\!0.001$& $\pmb{0.023}\!\pm \!0.001$&$78.68\!\pm\!0.93$ \\
    \Xhline{2.5\arrayrulewidth}
    \end{tabular}
    }
    \label{exp-zinc}
\end{table*}
\textbf{Datasets.} To answer Q3, we adopt three popular molecular graphs dataset---QM9, ZINC and ogbg-molhiv. QM9 contains 130k small molecules, and the task is regression on twelve targets such as energy. The training/validation/test splitting ratio is 0.8/0.1/0.1. ZINC~\cite{dwivedi2020benchmarking}, including ZINC-small (10k graphs) and ZINC-large (250k graphs), is a free database of commercially-available chemical compounds, and the task is graph regression. The ogbg-molhiv dataset contains 41k molecules for graph classification. The original release provides the splitting of both ZINC and ogbg-molhiv.

\textbf{Models.} For QM9, we adopt baselines including 1-GNN, 1-2-3-GNN~\cite{morris2019weisfeiler}, DTNN~\cite{wu2018moleculenet}, Deep LRP~\cite{chen2020can}, PPGN and NGNN. The baseline results are from \cite{zhang2021nested}. Note that we omit methods \cite{klicpera2020directional, anderson2019cormorant, qiao2020orbnet, liu2021spherical} that utilize additional geometric features or quantum mechanics theory, since our goal is to compare the models' graph representation power. We choose NNConv~\cite{gilmer2017neural} as our base GNN to construct I$^2$-GNN. For ZINC and ogbg-molhiv, we use GIN as the based GNN. We make comparisons to baselines such as GNNAK+, PNA~\cite{corso2020principal}, DGN~\cite{beaini2021directional}, HIMP~\cite{fey2020hierarchical}, GSN~\cite{bouritsas2022improving},
Deep LRP~\cite{chen2020can}, SUN~\cite{frasca2022understanding} and  CIN~\cite{bodnar2021weisfeiler}. See more details in Appendix \ref{app-ex-mol}.

\textbf{Results.} On QM9, Table \ref{exp-qm9} shows that I$^2$-GNN outperforms NGNN over most targets, with an average 21.6\% improvement. Particularly, I$^2$-GNN attains the best results on four targets out of twelve. On ZINC, I$^2$-GNN brings $25\%$ and $18\%$ performance gains to Nested GIN on ZINC-12k and ZINC-250k respectively. Moreover, despite being a model targeting on cycle counting, I$^2$-GNN approaches the SOTA results of CIN. On ogbg-molhiv, I$^2$-GNN improves the AUC of Nested GIN by 0.3\%. These results suggest that I$^2$-GNNs also bring improvement to general graph regression/classification tasks. 

\section{Conclusion}
We propose to study the representational power of Subgraph MPNNs via the ability to count cycles and paths. We prove that Subgraph MPNNs fail to count more-than-4-cycles at node level, which limits their power to encode surrounding ring systems with more than four atoms. Inspired by the localized nature of cycles, we extend Subgraph MPNNs by assigning an additional identifier to the neighbors of the root node to boost the counting power. The resulting model named I$^2$-GNNs turns out to be able to count at least 6-cycles in linear time with theoretical guarantee. Meanwhile, I$^2$-GNNs maintain excellent performance on general graph tasks.


\section{Acknowledge}
This project is supported in part by the National Key Research and Development Program of China (No. 2021ZD0114702).

\clearpage

\clearpage
\appendix
\section{Weisfeiler-Lehman test}
The Weisfeiler-Lehman (WL) Test~\cite{weisfeiler1968reduction} is a heuristic algorithm for the graph isomorphism test. Initially, the WL test assigns all nodes the same color, and then it follows an iterative color refinement scheme to update node colors: for each node, the new color is obtained by applying a hash function to its old color and the multiset of its neighbors' old colors. Formally, each iteration can be written as 
\begin{equation}
    c^{(t+1)}_i = \text{Hash}\left(c^{(t)}_i, \{c^{(t)}_j|j\in N(i)\}\right),
\end{equation}
where $c^{(t)}_i$ is the color of node $i$ at iteration $t$, and $\{\cdot\}$ should be understood as multiset. The algorithm terminates if the number of colors does not change after one iteration. Two graphs are determined to be non-isomorphic if the histogram of colors differs. The $k$-dimensional Weisfeiler-Lehman ($k$-WL) Test with $k\ge 2$ is a generalized version of WL test that performs color refinement on $k$-tuples. For $k\ge 2$, the discriminative power of the $k$-WL test is shown to increase constantly as $k$ increases~\cite{cai1992optimal, maron2019provably}. See \cite{morris2019weisfeiler, huang2021short} for more information.
\section{Counting power of MPNNs}
\label{app-mpnn-cp}
\begin{table*}[h]
  \vspace{-5pt}\caption{Node-level cycle and path counting power of GNN models.}
    \centering
     \vspace{-8pt}
    \begin{tabular}{lccccccc}
    \Xhline{2.5\arrayrulewidth}
        Substructures & 2-Path & 3-Path& 4-Path& 3-Cycle&4-Cycle& 5-Cycle & 6-Cycle   \\ \hline
        MPNNs & \Checkmark& \XSolidBrush & \XSolidBrush  & \XSolidBrush & \XSolidBrush & \XSolidBrush & \XSolidBrush\\
        Subgraph MPNNs  & \Checkmark& \Checkmark & \XSolidBrush  & \Checkmark & \Checkmark & \XSolidBrush & \XSolidBrush\\\
        I$^2$-GNNs  & \Checkmark& \Checkmark &   \Checkmark  &  \Checkmark  &  \Checkmark  &  \Checkmark  &  \Checkmark \\
    \Xhline{1.5\arrayrulewidth}
    \end{tabular}
    \label{exp-zinc}
\end{table*}
In this section we formally prove the counting power of MPNNs. We first give a positive result, that is MPNNs can count 2-paths at node level (thus graph level).
\begin{theorem}
MPNNs can count 2-paths at node level.
\end{theorem}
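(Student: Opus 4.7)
The strategy is purely constructive: exhibit one concrete MPNN whose node representation at $i$ equals $C(\text{2-path},i,G)$ exactly. Since Definition~2 only requires the existence of \emph{some} $f_{\text{node}}\in\mathcal F_{\text{node}}$ that separates tuples with distinct counts, computing the count itself immediately suffices.

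First I would rewrite the target quantity in a form that makes clear it is a two-hop, degree-based statistic. A 2-path starting at $i$ is a triple $(i,j,k)$ with $j\in N(i)$, $k\in N(j)$, and $k\neq i$; since $G$ is simple and undirected, $j\in N(i)$ implies $i\in N(j)$, so for each $j\in N(i)$ the admissible $k$ are exactly $N(j)\setminus\{i\}$, of which there are $|N(j)|-1$. Hence
\[
C(\text{2-path},i,G) \;=\; \sum_{j\in N(i)}\bigl(|N(j)|-1\bigr).
\]

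Second I would build an MPNN realizing this quantity in two rounds, starting from the canonical initialization $h^{(0)}_v=1$. In round one, choose $M_0\equiv 1$ and $U_0(h,s)=s$, so that $h^{(1)}_v=|N(v)|$ for every node $v$. In round two, choose $M_1(h_i,h_j,e_{ij})=h_j-1$ and $U_1(h,s)=s$; then
\[
h^{(2)}_i \;=\; \sum_{j\in N(i)}\bigl(h^{(1)}_j-1\bigr) \;=\; \sum_{j\in N(i)}\bigl(|N(j)|-1\bigr) \;=\; C(\text{2-path},i,G).
\]
The update and message functions used are affine in their arguments, so they lie inside the MPNN function class under the usual universal approximation assumption on $U_t,M_t$. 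Taking $f_{\text{node}}(i,G):=h^{(2)}_i$ then separates any two tuples with distinct 2-path counts.

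There is no real obstacle here: the only content is the algebraic identity turning the 2-path count into a sum of neighbor degrees, after which the MPNN construction is immediate. The same argument will later need care for longer paths and cycles (where the count stops being a simple degree functional and back-tracking walks must be excluded), and I would flag this as the place where the analogous strategy for MPNNs breaks, motivating the impossibility remark that follows in the paper.
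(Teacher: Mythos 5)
Your construction is exactly the paper's own proof: learn degrees in the first layer, then output $h_i=\sum_{j\in N(i)}(d_j-1)$ in the second, justified by the same identity between 2-paths from $i$ and the sum of neighbor degrees minus one. Correct and essentially identical in approach; the extra verification that the triple $(i,j,k)$ with $k\neq i$ enumerates 2-paths without double counting is a welcome but minor elaboration.
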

\begin{proof}
To count 2-paths starting from $i$, one can consider the 2-layer message passing function $h_i=\sum_{j\in N(i)}(d_j-1)$ (degree $d_j$ is learned in the first layer). Then $h_i$ equals to the number of 2-path with $i$ being the endpoint.
\end{proof}

Next we give the negative results at graph level (thus node level). 
\begin{theorem}
MPNNs cannot count paths with length greater than $2$ at graph level, and cannot count any cycles at graph level. 
\end{theorem}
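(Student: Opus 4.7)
The plan is to exhibit the same pair of graphs foreshadowed in the Remark and to verify both pieces (MPNN-indistinguishability and differing substructure counts) for every length $L \geq 3$ simultaneously. For each $L \geq 3$, I would let $G_1$ be the disjoint union of two $L$-cycles and $G_2$ a single $2L$-cycle; both graphs have exactly $2L$ vertices, are $2$-regular, and (absent node features) are initialised with a uniform colour.

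The first step is indistinguishability. Since MPNNs are bounded in discriminative power by the $1$-WL test, it suffices to show that $1$-WL produces the same multiset of node colours on $G_1$ and $G_2$. With uniform initial colour and every vertex of degree $2$ in both graphs, an easy induction on the iteration index $t$ shows that every vertex of $V(G_1) \cup V(G_2)$ carries one common colour $c^{(t)}$. Thus the multisets of final MPNN representations on the two graphs coincide, and any permutation-invariant readout $R_{\text{graph}}$ returns the same value on $G_1$ and $G_2$, so no MPNN can separate them.

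The second step is to read off differing substructure counts. For $L$-cycles, the only cycles in $G_1$ are its two components, giving $C(L\text{-cycle}, G_1) = 2$, whereas the unique simple cycle of $G_2$ has length $2L \neq L$, so $C(L\text{-cycle}, G_2) = 0$. For $L$-paths with $L \geq 3$, each component of $G_1$ has only $L$ vertices, which cannot accommodate a simple path on $L+1$ distinct vertices, so $C(L\text{-path}, G_1) = 0$; in $C_{2L}$ the simple $L$-paths are exactly the arcs of $L$ consecutive edges, one per edge-position, giving $C(L\text{-path}, G_2) = 2L > 0$. In both cases the counts disagree while the MPNN outputs agree, which establishes impossibility for every cycle length and every path length $L \geq 3$.

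The main obstacle is essentially cosmetic: the only subtle design choice is taking both components of $G_1$ to have exactly $L$ vertices (rather than anything larger), which is what forces $G_1$ to contain \emph{zero} simple $L$-paths and thereby lets a single family of counterexamples serve both the path and the cycle statements uniformly in $L$.
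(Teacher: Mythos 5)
Your proposal is correct and follows essentially the same route as the paper's own proof: the disjoint union of two $L$-cycles versus a single $2L$-cycle, indistinguishable because both are $2$-regular, yet differing in their $L$-cycle and $L$-path counts. You simply spell out the $1$-WL induction and the exact counts ($2$ vs.\ $0$ cycles, $0$ vs.\ $2L$ paths) in more detail than the paper does.
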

\begin{proof}
Let $L$ be any positive integer greater than or equal to 3. Let $G_1$ be two $L$-cycle and $G_2$ be $2L$-cycle. All nodes has a degree of $2$, and thus the graph representations are indistinguishable. We see that $G_1$ has two $L$-cycles while $G_2$ does not. Moreover, $G_2$ has $L$-paths while $G_1$ does not. Therefore we finish the proof.
\end{proof}
\begin{figure}[h]
    \centering
    \includegraphics[scale=0.35]{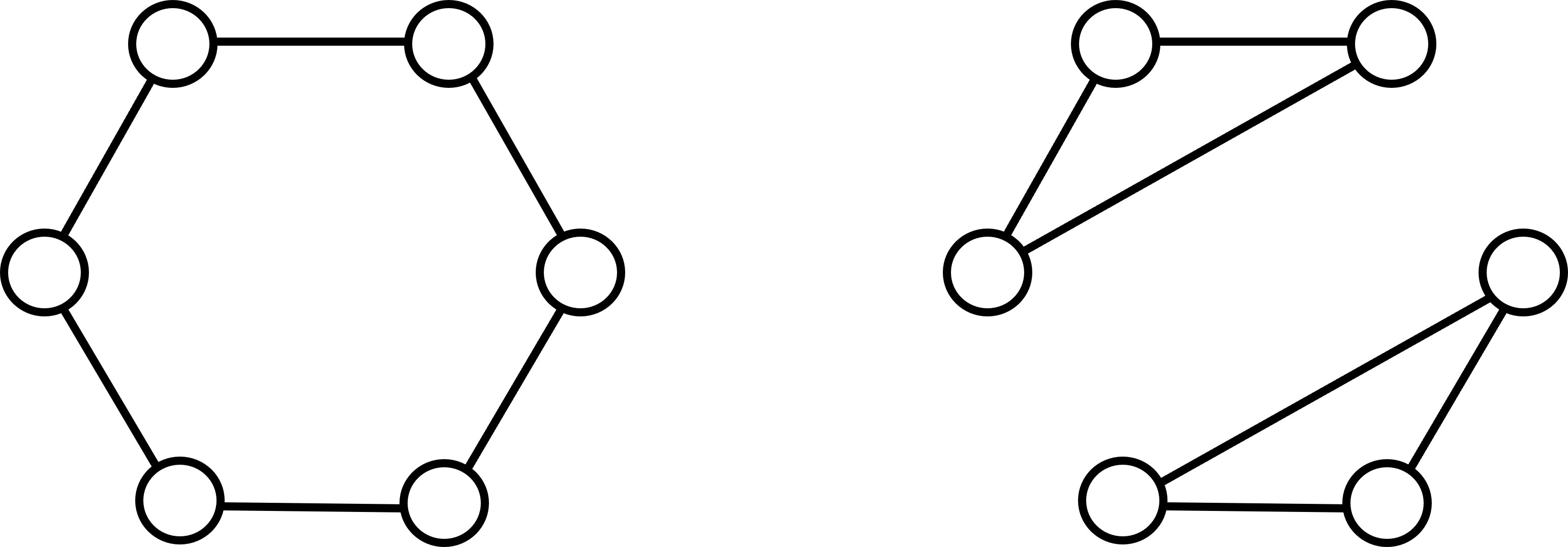}
    \caption{A pair of regular graphs that MPNNs cannot distinguish.}
    \label{gnn-cex}
\end{figure}
\section{Discussion on Proposition 2 in \cite{you2021identity}}
\label{idgnn-wrong}
In \cite{you2021identity}, the Proposition 2 claims that an $L$-layer ID-GNN can use node representations to express the number of cycles involving the node. In the proof, they use induction: they assume for arbitrary nodes $i,j$, the representations $h_{i,j}$ equals to the number of $L$-paths from $i$ to $j$, and they try to prove the case of $(L+1)$-paths using an additional layer of ID-GNN. The basic logic they use is that if there exists an $L$-path from $i$ to $k$, and $k$ is adjacent to $j$, then there must exist an $(L+1)$-path from $i$ to $j$. However, it is not true because the $L$-path from $i$ to $k$ might contain $j$ already. The induction holds for walks only. In conclusion, the proof actually shows ID-GNNs can count walks, but the number of walks cannot determine the number of cycles.

\section{Counting power of Subgraph MPNNs}
\label{app-idgnn}
In this section we give formal proofs of Proposition \ref{idgnn-proposition}, Theorem \ref{idgnn-theorem1} and Theorem \ref{idgnn-theorem2}. To begin with, we prove some useful lemmas.

\begin{lemma}[Identity labeling is the most powerful strategy]
Stategy of augmenting root nodes with unique identifiers, i.e. $(V_i, E_i)=(V, E)$ and $x_{i,j}=x_i\oplus \mathbbm{1}_{i=j}$, can express all strategies in \ref{subgraph def}.
\label{id-most-powerful}
\end{lemma}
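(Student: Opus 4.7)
The plan is to show that a Subgraph MPNN which keeps every rooted subgraph equal to the full graph, i.e.\ $(V_i, E_i) = (V, E)$, and uses only the identity node labeling $z_{i,j} = \mathbbm{1}_{i=j}$, is already expressive enough to simulate all four combinations of subgraph extraction (node deletion or $K$-hop ego-network) and node labeling (identity or SPD) from Definition~\ref{subgraph def}. Since every strategy eventually runs the same MPNN recursion on its subgraph, it is enough to show that the auxiliary features and the effective graph structure used by each other configuration can be \emph{reproduced inside} a subgraph MPNN that starts from identity labels on the full graph.

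The first step is to argue that within each rooted subgraph $G_i = (V,E)$, the shortest-path distance $\text{spd}(i,j)$ from the root to every other node $j$ can be stored in the running hidden state. The indicator $z_{i,i}=1$ picks out the root as a unique source; then by choosing $U_t, M_t$ (realizable as MLPs via universal approximation) one can reserve a coordinate that implements a BFS: the root fixes that coordinate to $0$, and any node whose coordinate is still undefined sets it to $t$ the first time a neighbor's coordinate becomes defined at iteration $t-1$. After $T$ iterations, every node within distance $T$ of $i$ carries its correct $\text{spd}(i,\cdot)$ as a feature; this already realizes the SPD labeling as a special case.

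With SPD available as an internal feature, the remaining three strategies are simulated by masking. $K$-hop ego-network extraction is reproduced by defining $M_t$ to zero out any message whose endpoints have $\text{spd}(i,\cdot) > K$, and by having $R_{\text{node}}$ ignore such nodes; the output is then identical to running an MPNN on $\text{EGO}_K(i,V,E)$. Node deletion is reproduced analogously by masking out the root itself (detected either as $z_{i,i}=1$ or as $\text{spd}(i,\cdot)=0$) and cutting every message incident to it, which emulates the graph $(V\setminus\{i\},\, E\setminus\{(i,j)\mid j\in N(i)\})$. Identity labeling is of course trivially present by construction.

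The main obstacle is bookkeeping rather than conceptual: one must exhibit explicit $U_t, M_t$ with enough feature channels to simultaneously carry (i) the original hidden state, (ii) the SPD counter, and (iii) any mask bits, and to guarantee that the number of iterations $T$ is large enough for SPD to propagate throughout the portion of the graph that the simulated configuration would ever read. Both tasks are handled by universal approximation of MLPs over finite feature tuples, so no genuine representational difficulty arises; the lemma follows by composing these simulations with the outer readouts $R_{\text{node}}$ and $R_{\text{graph}}$.
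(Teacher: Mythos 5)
Your proposal is correct and follows essentially the same route as the paper's proof: use the root's identity label as a BFS source to compute shortest-path distances (equivalently, to mark the $K$-hop reachable nodes), then simulate ego-network extraction and node deletion by masking messages and readout contributions for nodes outside the intended subgraph. The only cosmetic difference is that you derive the ego-network mask from the SPD feature, whereas the paper marks $K$-hop nodes directly with an analogous propagation rule; the two constructions are interchangeable.
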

\begin{proof}
For simplicity, we assume $h^{(0)}_{i,j}=\mathbbm{1}_{i=j}$. We discuss the listed strategies one by one.
\begin{itemize}[leftmargin=15pt, itemsep=2pt, parsep=2pt, partopsep=1pt, topsep=-3pt]
    \item Node deletion: we can recognize the root node by its identifier and ``mask out" the root node $i$ from the subgraph.
    \item Ego-networks: to mimic a $K$-hop ego-networks extraction, we can apply a $K$-layers MPNNs to mark nodes in the $K$-hop ego-networks. Concretely, $K$ iterations of the following message passing layer label all involving nodes as $1$ while others are $0$.
    \begin{equation}
        m_{i,j}=\sum_{k\in N(j)}h^{(t)}_{i,k},\quad h^{(t+1)}_{i,j} = \mathbbm{1}_{h^{(t)}_{i,j}=0}\mathbbm{1}_{m_{i,j}>0}+\mathbbm{1}_{h^{(t)}_{i,j}\ne 0}
    \end{equation}
    Then we mask out all nodes with zero labels, implementing a $K$-hop ego-networks extraction.
    \item Shortest path distances: similar to labeling nodes in the $K$-hop ego-netwroks, we can construct the following message passing layer 
    \begin{equation}
        m_{i,j}=\sum_{k\in N(j)}h^{(t)}_{i,k},\quad h^{(t+1)}_{i,j}=(t+1)\mathbbm{1}_{h^{(t)}_{i,j}=0}\mathbbm{1}_{m_{i,j}>0}+\mathbbm{1}_{h^{(t)}_{i,j}\ne 0}
    \end{equation}
\end{itemize}
\end{proof}
In the following discussions, we refer Subgraph MPNNs to this specific model: Subgraph MPNNs with identity node labeling. We assume the identifier of root node is always available in subgraph $G_i$ throughout.

\begin{lemma}
Subgraph MPNNs can count 2-paths and 3-paths that starts from node $i$ and ends at node $j$ using $h_{i,j}$.
\label{idgnn-lemma0}
\end{lemma}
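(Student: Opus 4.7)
My plan is to reduce the lemma to a short combinatorial identity and an explicit channelwise Subgraph MPNN construction. For $i\neq j$ one has $P_2(i,j)=(A^2)_{ij}$ since, in a loopless graph, every $2$-walk $i\to k\to j$ satisfies $k\neq i,j$ and is therefore a path. For $3$-paths I would derive
\[
P_3(i,j)=(A^3)_{ij}-A_{ij}\bigl(\deg(i)+\deg(j)-1\bigr)
\]
by inclusion-exclusion: the only ways a $3$-walk $i\to l_1\to l_2\to j$ fails to be a path are $l_1=j$ (contributing $A_{ij}\deg(j)$ walks), $l_2=i$ (contributing $A_{ij}\deg(i)$ walks), and the overlap $i\to j\to i\to j$ (contributing $A_{ij}$); the other coincidences would require self-loops.

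The second ingredient is to realize each of $(A^3)_{ij}$, $A_{ij}$, $\deg(j)$, and $A_{ij}\cdot\deg(i)$ as a separate coordinate of $h^{(3)}_{i,j}$. Starting from $h^{(0)}_{i,k}=\mathbbm{1}_{k=i}$ together with an auxiliary constant-$1$ channel, sum aggregation with identity MLPs produces $A_{ik}$ and $\deg(k)$ after one iteration and, inductively, $(A^t)_{ik}=\sum_{l\in N_i(k)}(A^{t-1})_{il}$ after $t$ iterations. Because $M_t$ and $U_t$ are arbitrary MLPs of sufficient width, these channels propagate in parallel without interference, so $(A^2)_{ij}$ already sits inside $h^{(2)}_{i,j}$, settling the $2$-path half of the lemma.

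The delicate point is exposing the root's degree $\deg(i)$ at the (possibly distant) node $j$, since a Subgraph MPNN can only move information through local messages. The saving observation is that the correction $A_{ij}(\deg(i)+\deg(j)-1)$ vanishes whenever $A_{ij}=0$, so it suffices to make $\deg(i)$ available at the $1$-hop neighbors of $i$. I would therefore dedicate one more channel to $\mathbbm{1}_{k=i}\cdot\deg(k)$ at iteration~$1$ (which equals $\deg(i)$ exactly at the root, by the identity labeling, and $0$ elsewhere) and aggregate it once over neighbors at iteration~$2$ to obtain $A_{ik}\cdot\deg(i)$. A final MLP in $U_3$ then assembles the stored channels into $(A^3)_{ij}-A_{ij}\deg(j)-A_{ij}\deg(i)+A_{ij}$, which by the identity above equals $P_3(i,j)$ and is output as a coordinate of $h_{i,j}=h^{(3)}_{i,j}$. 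This masked propagation of $\deg(i)$ is the main obstacle, and it is precisely what the root identifier makes possible.
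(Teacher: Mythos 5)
Your proposal is correct, and it reaches the same destination as the paper by a noticeably different organization of the argument. The paper maintains exact path counts recursively: $h^{(1)}_{i,k}=A_{ik}$, $h^{(2)}_{i,k}=\mathbbm{1}_{k\ne i}(A^2)_{ik}$ (already the exact $2$-path count, because the $k\ne i$ mask kills the only bad $2$-walks), and then one more aggregation with a correction term $\propto A_{ij}(\deg(j)-1)$ to remove the walks that revisit $j$. Because the $l_2=i$ walks were already excluded at the previous step, the paper never needs $\deg(i)$ at node $j$. You instead compute the raw walk count $(A^3)_{ij}$ and apply inclusion--exclusion globally, which forces you to confront the transport of $\deg(i)$ to $j$; your observation that the correction is supported on $N(i)$, so one masked hop of the channel $\mathbbm{1}_{k=i}\deg(k)$ suffices, resolves this cleanly. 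Your closed-form identity $P_3(i,j)=(A^3)_{ij}-A_{ij}(\deg(i)+\deg(j)-1)$ is the correct inclusion--exclusion (indeed, the paper's final displayed subtraction $\sum_{k\in N(j)}h^{(1)}_{i,j}=\deg(j)A_{ij}$ is off by $A_{ij}$ from the $(\deg(j)-1)A_{ij}$ its own prose derives; expanding your identity shows the two approaches agree once that slip is fixed). One small omission: your identities hold only for $j\ne i$, and the final readout should carry an explicit $\mathbbm{1}_{j\ne i}$ factor (available from the root identifier) so that $h_{i,i}$ does not report $(A^2)_{ii}=\deg(i)$ or the closed $3$-walk count; this matters when the lemma is later summed over $j$ to count cycles.
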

\begin{proof}
We can construct the following MPNNs to count 2-path between the root node $i$ and another node $j$:
\begin{subequations}
\begin{align}
    h^{(1)}_{i,j} &= \sum_{k\in N(j))} \mathbbm{1}_{k=i}.\\
    h^{(2)}_{i,j} &= \mathbbm{1}_{j\ne i}\cdot\sum_{k\in N(j)} h^{(1)}_{i,k}.
\end{align}
\end{subequations}
One can easily see that $h^{(2)}_{i,j}$ equals to number of $2$-paths from $i$ to $j$. Similarly, we construct the following MPNNs to count 3-paths between node $i$ and $j$:
\begin{subequations}
\begin{align}
    h^{(1)}_{i,j} &= \sum_{k\in N_i(j)} \mathbbm{1}_{k=i}.\\
    h^{(2)}_{i,j} &= \mathbbm{1}_{j\ne i}\sum_{k\in N(j)} h^{(1)}_{i,k}.\\
    h^{(3)}_{i,j} &=\mathbbm{1}_{j\ne i}\sum_{k\in N(j)}\left(h^{(2)}_{i,k}-h^{(1)}_{i,j}\right).
\end{align}
\end{subequations}
To see why this is true, note that $h^{(1)}_{i,j}$ marks the neighbors of $i$ and $h^{(2)}_{i,j}$ marks the $2$-hop neighbors of $i$. One one hand, if node $j$ is not the neighbor of $i$, then $\sum_{k\in N(j)}h^{(2)}_{i,j}$ is exactly the number of $3$-paths from $i$ to $j$. On the other hand, if $j$ is the neighbor of $i$, then $\sum_{k\in N(j)}h^{(2)}_{i,k}$ will additionally get $d_j-1$ times unexpected counts. Thus by subtraction, we can see $\sum_{k\in N(j)}h^{(2)}_{i,j}-\mathbbm{1}_{j\in N(i)}(d_j-1)=\sum_{k\in N(j)}(h^{(2)}_{i,j}-h^{(1)}_{i,j})$ correctly counts 3-paths from $i$ to $j$.    
\end{proof}
\subsection{Proof of Proposition \ref{idgnn-proposition}}
\label{app-idgnn-proposition}
\begin{figure}[t]
    \centering
    \includegraphics[scale=0.25]{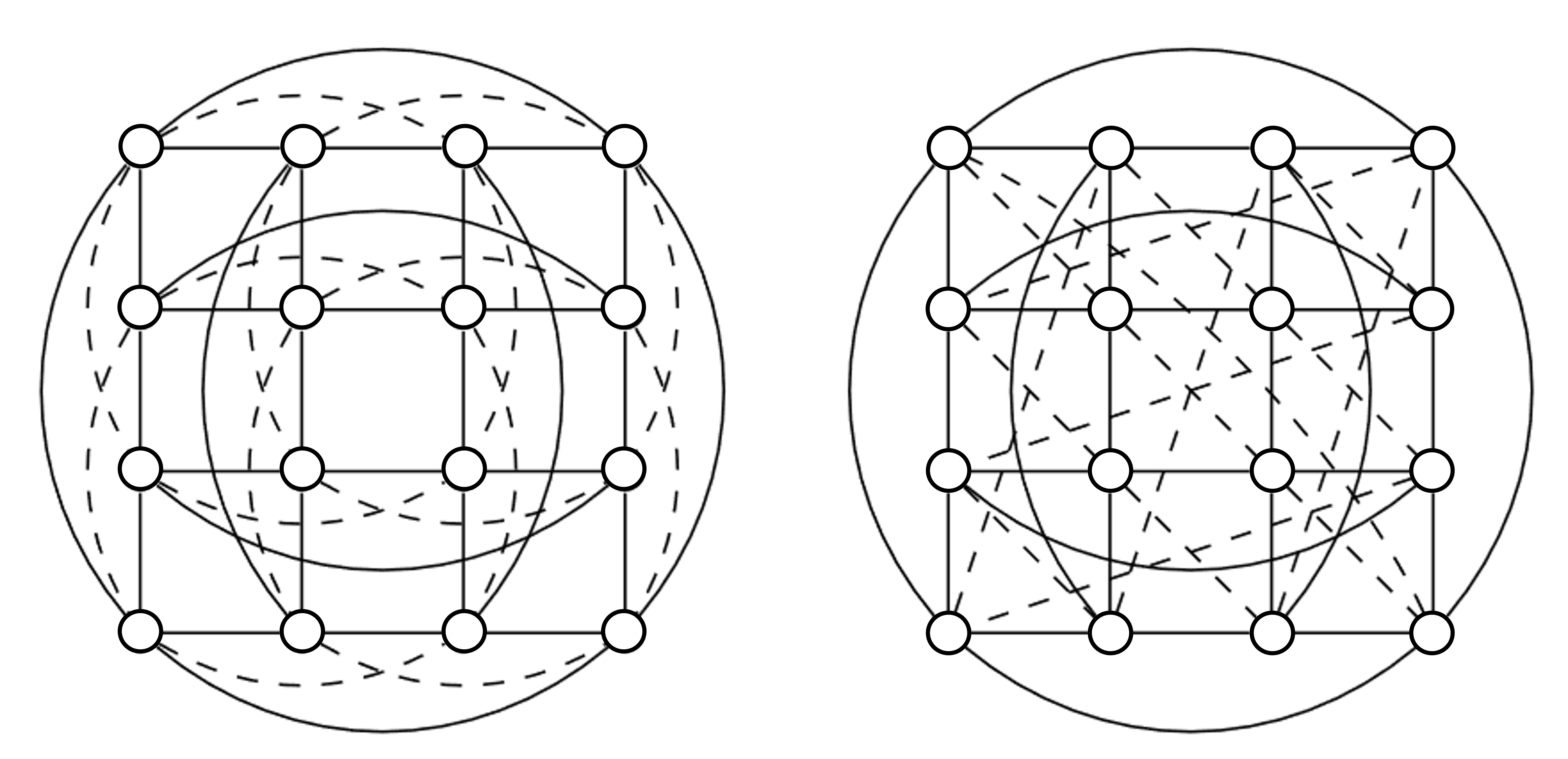}
    \caption{The 4x4 Rook's graph and the Shrikhande graph (some edges are dashed just to ensure readability). The figure is modified from \cite{arvind2020weisfeiler}.}
    \label{srg}
\end{figure}
Consider the 4x4 Rook's graph and the Shrikhande graph shown in Figure~\ref{srg}. We are going to show that Subgraph MPNNs cannot distinguish them. We choose the node in top left coner as the root node. We use a matrix to denote the colors of nodes in this 4x4 grid. Initially, in both graphs let the root node's color be $1$, and other nodes are $0$:
\begin{equation}
    h^{(0)}_{\text{Rook}} = h^{(0)}_{\text{Shrik}}=\begin{pmatrix}
    1 & 0 & 0 & 0 \\ 0 & 0 & 0 & 0 \\ 0 & 0 & 0 & 0\\0 & 0 & 0 & 0
    \end{pmatrix}.
\end{equation}
In the first iteration, let the hash function be $\text{Hash}(1, \{0,0,0,0,0,0\})=2,$, $\text{Hash}(0, \{1,0,0,0,0,0\})=1$ and $\text{Hash}(0, \{0,0,0,0,0,0\})=0$. Then the new colors become:
\begin{equation}
    h^{(1)}_{\text{Rook}} = \begin{pmatrix}
    2 & 1 & 1 & 1 \\ 1 & 0 & 0 & 0 \\ 1 & 0 & 0 & 0\\1 & 0 & 0 & 0
    \end{pmatrix},\quad h^{(1)}_{\text{Shrik}} =\begin{pmatrix}
    2 & 1 & 0 & 1 \\ 1 & 1 & 0 & 0 \\ 0 & 0 & 0 & 0\\ 1 & 0 & 0 & 1
    \end{pmatrix}.
\end{equation}
In the second iteration, let the hash function be $\text{Hash}(2, \{1,1,1,1,1,1\})=2$, $\text{Hash}(1, \{2, 1,1,0,0,0\})=1$ and $\text{Hash}(0, \{1, 1, 0, 0, 0, 0\})$. We can see that the refinement process converges. Thus, both the top left corner node's representations are
\begin{equation}
    h_{\text{Rook}} = h_{\text{Shirk}}=\text{Hash}(\{2, 1,1,1,1,1,1,0,0,0,0,0,0,0,0,0\}).
\end{equation}
Thus Subgraph MPNNs cannot distinguish these two nodes. This is true regardless of which root node we choose. Therefore Subgraph MPNNs cannot distinguish the 4x4 Rook's graph and the Shirkhande graph.
\subsection{Proof of Theorem \ref{idgnn-theorem1}}
Using Lemma \ref{idgnn-lemma0}, we can easily prove Theorem \ref{idgnn-theorem1}. Concretely, let $h_{i,j}$ be the number of $2$-paths from node $i$ to node $j$, then $h_{i}=\frac{1}{2}\sum_{j\in N(i)}h_{i,j}$ is the number of $3$-cycles involving node $i$, and $h_i=\sum_{j\in V}h_{i,j}$ is the number of $2$-paths starting from node $i$. Similarly, let $h_{i,j}$ be the number of $3$-paths from node $i$ to node $j$, then $h_{i}=\frac{1}{2}\sum_{j\in N(i)}h_{i,j}$ is the number of $4$-cycles involving node $i$, and $h_i=\sum_{j\in N(i)}h_{i,j}$ is the number of $3$-paths starting from node $i$. Note that we use summation over the neighbor of $i$, which can be implemented by marking the neighbor of $i$.  

\subsection{Proof of Theorem \ref{idgnn-theorem2}}
To prove Theorem \ref{idgnn-theorem2}, consider the counter-example shown in figure \ref{ngnn_cex}.

\begin{figure}[h]
    \centering
    \includegraphics[scale=0.35]{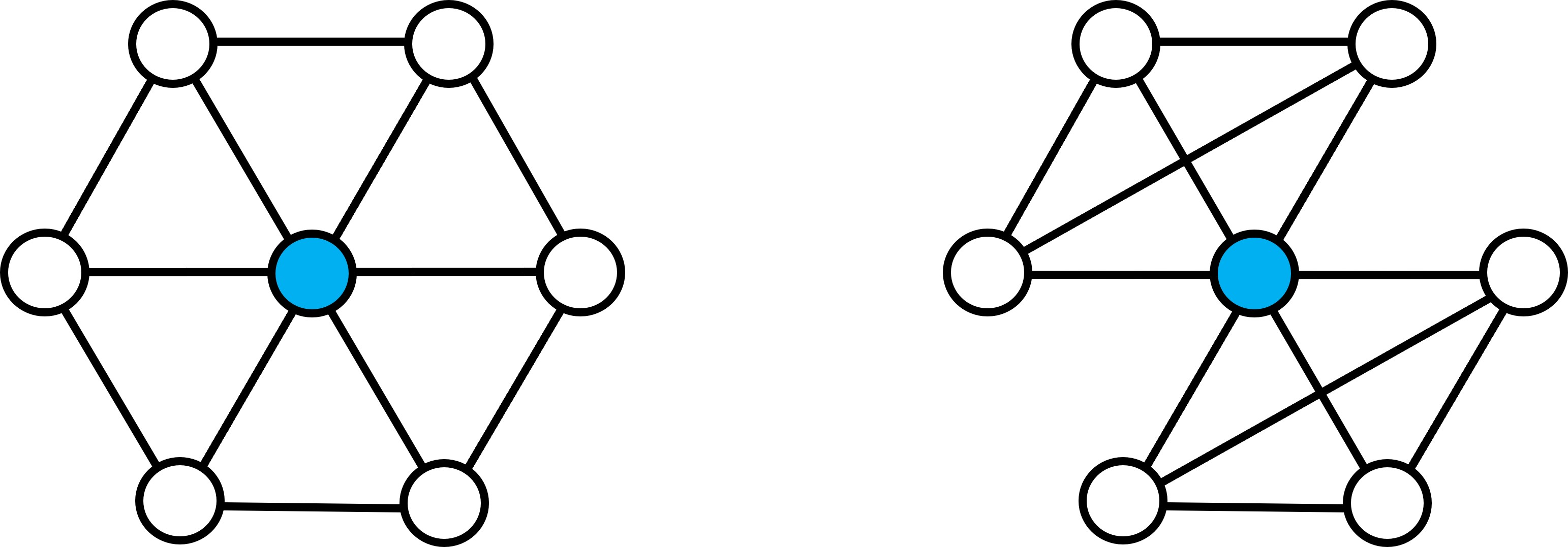}
    \caption{The two blue nodes in two graphs can not been distinguished by subgraph MPNNs. The left one is in six $5$-cycles and six $4$-paths while the right one is not.}
    \label{ngnn_cex}
\end{figure}
Note that any Subgraph MPNNs get exactly the same representations for the two blue nodes. The left blue node involves in six 5-cycles and six 4-paths while the right one does not. Actually, one can construct similar counter-examples: one graph is a $2L$-cycle (white nodes) with all nodes connected to an additional node (blue node), another graph is two $L$-cycle (white nodes) with all nodes connected to an additional node (blue node). The first blue node involves in $(L+2)$-cycles and $(L+1)$-paths while the second one does not.

\section{Counting power of I$^2$-GNNs}
\label{app-id2gnn}
In this section we prove Proposition \ref{i2gnn-proposition}, Theorems \ref{i2gnn-c}, \ref{i2gnn-p} and \ref{i2gnn-k}. We first prove Lemma \ref{i2gnn-lemma}, which is further used to prove Theorem \ref{i2gnn-p} and part of Theorem \ref{i2gnn-c}. 

For recap, I$^2$-GNNs assign two identifiers to root node $i$ and one of its neighbors $j$: $h^{(0)}_{i,j,k}=\mathbbm{1}_{k=i}\oplus\mathbbm{1}_{k=j}$. We assume the identity of $i$ and $j$ are always available in subgraph $G_{i,j}$.
\begin{lemma}
I$^2$-GNNs can count $4$-paths in forms of $(i\to j\to ... \to k)$ using $h_{i,j,k}$.
\label{i2gnn-lemma}
\end{lemma}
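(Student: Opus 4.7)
The plan is to reduce counting $4$-paths of the form $(i\to j\to m\to n\to k)$ starting with the fixed edge $(i,j)$ to counting $3$-paths of the form $(j\to m\to n\to k)$ in $G\setminus\{i\}$, and then to invoke a modified version of the $3$-path construction from Lemma \ref{idgnn-lemma0}. The key enablers are the two node identifiers available in subgraph $G_{i,j}$: the identifier of $j$ serves as the root marker for a Subgraph-MPNN-style computation, while the identifier of $i$ serves as a mask that lets message passing effectively simulate computation on $G\setminus\{i\}$.

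Concretely, I would first establish a bijection between $4$-paths $(i\to j\to m\to n\to k)$ in $G$ and $3$-paths $(j\to m\to n\to k)$ in $G$ satisfying $m,n,k\neq i$. Prepending the edge $(i,j)$ is the forward map; restricting a qualifying $3$-path is its inverse, and the constraint $m,n,k\neq i$ together with the intrinsic distinctness of a $3$-path guarantees that all five nodes of the resulting $4$-path are distinct (we also use $j\neq i$ since $j\in N(i)$). Hence it suffices to show that I$^2$-GNNs can compute, inside $h_{i,j,k}$, the number of $3$-paths from $j$ to $k$ whose intermediate and terminal nodes all avoid $i$.

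Next I would design a three-layer message-passing sequence mirroring the proof of Lemma \ref{idgnn-lemma0}, with $j$ playing the role of the root (accessed via its identifier) and with the identifier of $i$ used as a multiplicative mask on every contribution. Writing $a_k \triangleq \mathbbm{1}_{k=i}$ and $b_k\triangleq\mathbbm{1}_{k=j}$, the first layer yields the indicator that $k\in N(j)$ with $k\neq i$; the second layer yields the number of $2$-paths from $j$ to $k$ whose internal node differs from $i$; and the third layer, following the subtraction trick of Lemma \ref{idgnn-lemma0}, isolates the $3$-paths from $j$ to $k$ whose internal and terminal nodes all avoid $i$. The main obstacle I expect is the over-counting correction in the third layer: when $k\in N(j)$, the naive sum introduces degenerate walks $j\to k\to n\to k$, and the correct quantity to subtract depends on whether $i$ and/or $j$ lies in $N(k)$. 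Both of these quantities are recoverable from a single additional aggregation of the identifiers $a$ and $b$ over the neighborhood of $k$, so the correction can be absorbed into a fixed number of layers without blow-up. Once this bookkeeping is verified, $h_{i,j,k}$ equals the number of $4$-paths of the desired form and the lemma follows.
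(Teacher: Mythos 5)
Your proposal is correct and takes essentially the same route as the paper: the paper likewise reduces counting $4$-paths $(i\to j\to\cdots\to k)$ to counting $3$-paths from $j$ to $k$ that avoid $i$, and realizes this with a three-layer message-passing construction in which the identifier of $j$ plays the root role from Lemma~\ref{idgnn-lemma0} while the identifier of $i$ acts as a multiplicative mask $\mathbbm{1}_{k\ne i}$, including the same subtraction correction in the final layer. No substantive differences.
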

\begin{proof}[Proof sketch]
The core idea stems from Lemma \ref{idgnn-lemma0}, that using one unique identifier can count $3$-paths. Recall that I$^2$-GNNs use two unique identifiers for two adjacent nodes $i$ and $j$, therefore one can show that $j$ can count number of $3$-paths to another $k$ without passing node $i$.  Equivalently, this is exactly the number of $4$-paths from node $i$ to node $k$ while passing node $j$ in the first walk.
\end{proof}
\begin{proof}
Consider the following message passing functions:
\begin{subequations}
\begin{align}
    h^{(1)}_{i,j,k} &=\mathbbm{1}_{k\ne i}\sum_{l\in N(k)}\mathbbm{1}_{l=j},\\
    h^{(2)}_{i,j,k} &=\mathbbm{1}_{k\ne i}\mathbbm{1}_{k\ne j}\sum_{l\in N(k)}h^{(1)}_{i,j,l},\\
    h^{(3)}_{i,j,k} & = \mathbbm{1}_{k\ne i}\mathbbm{1}_{k\ne j}\sum_{l\in N(k)}\left(h^{(2)}_{i,j,l}-h^{(1)}_{i,j,k}\right).
\end{align}
\end{subequations}
One can see that $h^{(3)}_{i,j,k}$ is the number of $3$-paths from $j$ to $k$ without passing $i$. Given that $i,j$ are adjacent, $h^{(3)}_{i,j,k}$ is equivalent to the number of $4$-paths from $i$ to $k$ while passing $j$ in the first walk.
\end{proof}

\subsection{Proof of Proposition \ref{i2gnn-proposition}}
\label{app-i2gnn-proposition}
In this subsection we are going to show I$^2$-GNNs can distinguish the 4x4 Rook's graph and the Shrinkhande graph shown in Figure \ref{srg}. Suppose we adopt 1-hop ego-network, then we find that for every root node $i$ of 4x4 Rook's graph, ego-net $G_i$ is the left graph shown in Figure \ref{ngnn_cex} ($i$ is the blue node), and for every root node $i$ of Shrikhande graph, ego-net $G_i$ is the right graph shown in Figure \ref{ngnn_cex} ($i$ is the blue node). Let $j$ be one of the neighbors of $i$. Now let us construct a message passing function to distinguish them. For instance, in the first layer we mark the common neighbors of $i$ and $j$, and in the second layer we determine the connectivity of these common neighbors. 
\begin{subequations}
\begin{align}
\begin{pmatrix}h^{(1)}_{i,j,k,1}\\h^{(1)}_{i,j,k,2}\end{pmatrix} &= \mathbbm{1}_{k\ne i}\mathbbm{1}_{k\ne j}\sum_{l\in N(k)}\begin{pmatrix}
\mathbbm{1}_{l=i}\\\mathbbm{1}_{l=j}
\end{pmatrix},\\
h^{(2)}_{i,j,k} &= h^{(1)}_{i,j,k,1}h^{(1)}_{i,j,k,2}\sum_{l\in N(k)}h^{(1)}_{i,j,l,1}h^{(1)}_{i,j,l,2}.
\end{align}
\end{subequations}
Then we can see that in 1-hop ego-net of the 4x4 Rook's graph, the node representations $h^{(2)}_{i,j,k}$ are all $0$. In contrast, in 1-hop ego-net of the Shrikhande graph, the representations of common neighbors of $i$ and $j$ are $1$. Therefore, I$^2$-GNNs can distinguish the 4x4 Rook's graph and the Shrikhande graph.
\subsection{Proof of Theorem \ref{i2gnn-p}}
I$^2$-GNNs can count $2$-paths and $3$-paths, since we have Theorem \ref{idgnn-theorem1} and I$^2$-GNNs are more powerful than Subgraph MPNNs. Concerning $4$-paths, according to Lemma \ref{i2gnn-lemma}, we let $h_{i,j,k}$ be number of $4$-paths in forms of $(i\to j\to ...\to k)$. Then one can see that $h_{i}=\sum_{j\in N(i)}\sum_{k\in V}h_{i,j,k}$ equals to number of $4$-paths involving $i$.

\subsection{Proof of Theorem \ref{i2gnn-c}}
I$^2$-GNNs can count $3$-cycles and $4$-cycles since we have Theorem \ref{idgnn-theorem1} and I$^2$-GNNs are more powerful than Subgraph MPNNs. Also, given Lemma \ref{i2gnn-lemma}, we can let $h_{i,j,k}$ be the number of $4$-paths in forms of $(i\to j\to *\to k)$. Then one can show that $h_{i,j}=\sum_{k\in N(i)}h_{i,j,k}$ equals to the number of $5$-cycles involving edge $(i,j)$. Thus $h_i=\frac{1}{2}\sum_{j\in N(i)}h_{i,j}$ is exactly the number of $5$-cycles involving node $i$.
\begin{figure}[h]
    \centering
    \includegraphics[scale=0.33]{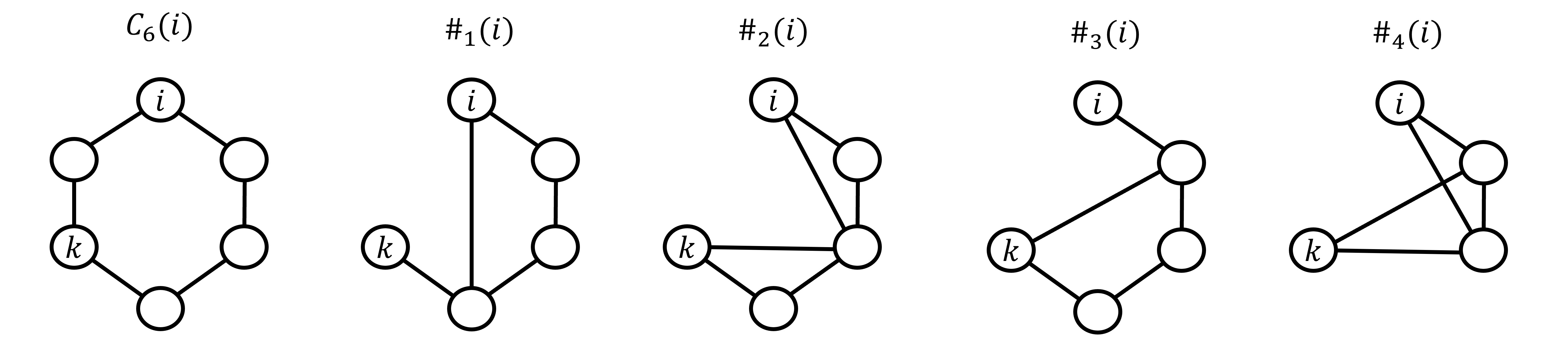}
    \caption{The first four patterns satisfy: there exist a $4$-path and a $2$-path from node $i$ to $k$ simultaneously. }
    \label{six}
\end{figure}

Now let us prove that I$^2$-GNNs can count $6$-cycles. We note that a necessary condition of a 6-cycle is that there exist a $4$-path and a $2$-path from node $i$ to $k$ simultaneously, as shown in left-most pattern in Figure \ref{six}. Thus let us consider the following function:
\begin{equation}
    \#_0(i) = \sum_{k\in V}P_4(i,k)\cdot P_2(i,k),
\end{equation}
where $P_2(i,k), P_4(i,k)$ are number of $2$-paths and $4$-paths from $i$ to $k$. However, $\#_0(i)$ is not equal to the number of $6$-cycles: the right three patterns in Figure \ref{six} also contribute to $\#_0(i)$. In fact, let $\#_1(i),\#_2(i),\#_3(i)$ be the number of these three patterns that involves node $i$, then we have the following relation~\cite{furer2017combinatorial}:
\begin{equation}
    C_6(i) = \#_0(i) - \#_1(i) - \#_2(i) - \#_3(i).
\end{equation}
We now prove that I$^2$-GNNs can express $\#_0$ to $\#_3$, which consequently proves that I$^2$-GNNs can count $6$-cycles.

\textbf{Case 0}. Using Lemma \ref{i2gnn-lemma}, we can let $h_{i,j,k}=P_4(i\to j\to...\to k)$, i.e. the number of $4$-paths in forms of $(i\to j\to ...\to k)$. Besides, we can let $h^{\prime}_{i,j,k}=P_2(i,k)$, since counting $2$-paths does not require identifier of $j$. Then we have
\begin{equation}
\begin{split}
    \sum_{j\in N(i)}\sum_{k\in V}h_{i,j,k}h^{\prime}_{i,j,k}&=\sum_{j\in N(i)}\sum_{k\in V}P_4(i\to j\to ...\to k)P_2(i,k)\\
    &=
    \sum_{k\in V}\left(\sum_{j\in N(i)}P_4(i\to j\to ...\to k)\right)P_2(i,k)\\
    &= \sum_{k\in V}P_4(i,k)P_2(i,k)=\#_0(i).
\end{split}
\end{equation}

\textbf{Case 1.} To count $\#_1(i)$, notice that the $4$-paths and $2$-paths from $i$ to $k$ share the same second last node. This is equivalent to number of $4$-paths with second last node being the neighbor of $i$. Therefore, we construct message passing functions as follows:
\begin{subequations}
    \begin{align}
        \begin{pmatrix}h^{(1)}_{i,j,k, 1}\\h^{(1)}_{i,j,k,2}\end{pmatrix} &= \mathbbm{1}_{k\ne i}\sum_{l\in N(k)}\begin{pmatrix}\mathbbm{1}_{l=j}\\\mathbbm{1}_{l=i} \end{pmatrix},\\
        h^{(2)}_{i,j,k}&=\mathbbm{1}_{k\ne i}\mathbbm{1}_{k\ne j}\sum_{l\in N(k)}h^{(1)}_{i,j,l,1},\\
        h^{(3)}_{i,j,k}&=\mathbbm{1}_{k\ne i}\mathbbm{1}_{k\ne j}\sum_{j\in N(k)}h^{(1)}_{i,j,l,2}\left(h^{(2)}_{i,j,l}-h^{(1)}_{i,j,l,1}\right).
    \end{align}
\end{subequations}
Here $h^{(1)}_{i,j,k,2}$ is an indicator that labels the neighbor of $i$. Only the neighbor of $i$ can pass messages in the last step. Thus $h^{(3)}_{i,j,k}$ equals the number of $4$-paths in forms of $(i\to j\to ...\to k)$ with the second last node being the neighbor of $i$. One can then see that $\sum_{j\in N(i)}\sum_{k\in V}h^{(3)}_{i,j,k}=\#_1(i)$. 

\textbf{Case 2.} It is relatively difficult to count $\#_2(i)$. We first derive the following equation:
\begin{equation}
    \#_2(i) = \left[\sum_{k\in V}\sum_{j\in N(i)\cap N(k)}\left(C_3(i,j)-\mathbbm{1}_{(i,k)\in E}\right)\left(C_3(j,k)-\mathbbm{1}_{(i,k)\in E}\right)\right] - 2\cdot\#_4(i),
    \label{case 2}
\end{equation}
where $C_3(i,j), C_3(j,k)$ are number of triangles that involve $(i,j)$ and $(j,k)$ as edges, and $\#_4(i)$ is the number of the rightest pattern shown in Figure \ref{six}. Let us explain why it is true. The term $\sum_{k\in V}\sum_{j\in N(i)\cap N(k)}C_3(i,j)C_3(j,k)$ counts the cases where $i$ and $k$ are both in triangles that shares one common node. However, it is not equal to $\#_2(i)$, since (1) if $i,k$ are adjacent, then it contributes an additional triangles to each iterating node $j$ in the summation; (2) If rightest patterns in Figure \ref{six} appears, it contributes two unexcepted counts. Thus by substracting these redundant counts, we obtain equation (\ref{case 2}).

Now let us prove that I$^2$-GNNs can express $\#_2(i)$ through equation (\ref{case 2}). We first transform the first term into:
\begin{equation}
    \begin{split}
       &\sum_{k\in V}\sum_{j\in N(i)\cap N(k)}\left(C_3(i,j)-\mathbbm{1}_{(i,k)\in E}\right)\left(C_3(j,k)-\mathbbm{1}_{(i,k)\in E}\right)\\
        &=\sum_{k\in V}\sum_{j\in N(i)}\mathbbm{1}_{(j,k)\in E}\left(C_3(i,j)-\mathbbm{1}_{(i,k)\in E}\right)\left(C_3(j,k)-\mathbbm{1}_{(i,k)\in E}\right)\\
        &=\sum_{k\in V}\sum_{j\in N(i)}\mathbbm{1}_{(j,k)\in E}C_3(i,j)\left(C_3(j,k)-\mathbbm{1}_{(i,k)\in E}\right)-\mathbbm{1}_{(j,k)\in E}\mathbbm{1}_{(i,k)\in E}\left(C_3(j,k)-\mathbbm{1}_{(i,k)\in E}\right)\\
        &=\sum_{j\in N(i)}C_3(i,j)\sum_{k\in V}\mathbbm{1}_{k\in N(j)}\left(C_3(j,k)-\mathbbm{1}_{k\in N(i)}\right)
        -\sum_{j\in N(i), \atop k\in V}\mathbbm{1}_{k\in N(i)\cap N(j)}\left(C_3(j,k)-\mathbbm{1}_{k\in N(i)}\right).
    \end{split}
    \label{term 1}
\end{equation}
We then use I$^2$-GNNs to encode $h_{i,j,k}$ as follows:
\begin{equation}
    h_{i,j,k} = \begin{pmatrix}
    \mathbbm{1}_{k\in N(i)},& \mathbbm{1}_{k\in N(j)}, & C_3(j,k)
    \end{pmatrix}^{\top}
\end{equation}
Note that these encodings are easily to expressed by message passing. Now we can finally express equation (\ref{term 1}) using $h_{i,j,k}$:
\begin{equation}
\begin{split}
    &\sum_{j\in N(i)}C_3(i,j)\sum_{k\in V}\mathbbm{1}_{k\in N(j)}\left(C_3(j,k)-\mathbbm{1}_{k\in N(i)}\right)
        -\sum_{j\in N(i), \atop k\in V}\mathbbm{1}_{k\in N(i)\cap N(j)}\left(C_3(j,k)-\mathbbm{1}_{k\in N(i)}\right)\\
    &=\sum_{j\in N(i)}\left(\sum_{k\in V}h_{i,j,k,1}h_{i,j,k,2}\right)\left(\sum_{k\in V}h_{i,j,k,2}(h_{i,j,k,3}-h_{i,j,k,1})\right)\\
    &\qquad-\sum_{j\in N(i)}\sum_{k\in V}h_{i,j,k,1}h_{i,j,k,2}(h_{i,j,k,3}-h_{i,j,k,1}).
\end{split}
\end{equation}
Next let us prove that I$^2$-GNNs can count $\#_4(i)$. Image we label the neighbor of $i$ as $j$, then this pattern occurs if and only if there exists a node $k\in V\backslash\{i,j\}$ such that: (1) $k$ is the neighbor of $j$; (2) $k$ is the neighbor of the nodes in $N(i)\cap N(j)$. Thus we can write down the following message passing functions:
\begin{subequations}
\begin{align}
   \begin{pmatrix}h^{(1)}_{i,j,k, 1}\\h^{(1)}_{i,j,k,2} \end{pmatrix}&=\mathbbm{1}_{k\ne i}\mathbbm{1}_{k\ne j}\sum_{l\in N(k)}\begin{pmatrix}
   \mathbbm{1}_{l=j}\\\mathbbm{1}_{l=i}
   \end{pmatrix},\\
   h^{(2)}_{i,j,k}&=\mathbbm{1}_{k\ne i}\mathbbm{1}_{k\ne j}h^{(1)}_{i,j,k,1}\sum_{l\in N(k)}h^{(1)}_{i,j,l,1}h^{(1)}_{i,j,l,2}.
\end{align}
\end{subequations}
The resulting $\frac{1}{2}\sum_{j\in N(i)}\sum_{k\in V}h^{(2)}_{i,j,k}$ equals to $\#_4(i)$.

\textbf{Case 3.} Note that $\#_3(i)$ equals to the multiplication of the number $4$-paths and $2$-paths that share the same second node, i.e. $(i\to j\to *\to *\to *)$ and $(i\to j\to *)$. Consider the message passing defined as follows:
\begin{subequations}
\begin{align}
    h^{(1)}_{i,j,k} &=\mathbbm{1}_{k\ne i}\sum_{l\in N(k)}\mathbbm{1}_{l=j},\\
    h^{(2)}_{i,j,k} &=\mathbbm{1}_{k\ne i}\mathbbm{1}_{k\ne j}\sum_{l\in N(k)}h^{(1)}_{i,j,l},\\
    h^{(3)}_{i,j,k} & = \mathbbm{1}_{k\ne i}\mathbbm{1}_{k\ne j}h^{(1)}_{i,j,k}\sum_{l\in N(k)}\left(h^{(2)}_{i,j,l}-h^{(1)}_{i,j,k}\right).
\end{align}
\end{subequations}
Then $\#_3(i)=\sum_{j\in N(i)}\sum_{k\in V}h^{(3)}_{i,j,k}$.

Since we can express $\#_0(i),\#_1(i),\#_2(i),\#_3(i)$, we are able to express number of $6$-cycles involving node $i$ using $C_6(i)=\#_0(i)-\#_1(i)-\#_2(i)-\#_3(i)$.

\subsection{Proof of Theorem \ref{i2gnn-k}}
\begin{figure}[t]
    \centering
    \includegraphics[scale=0.35]{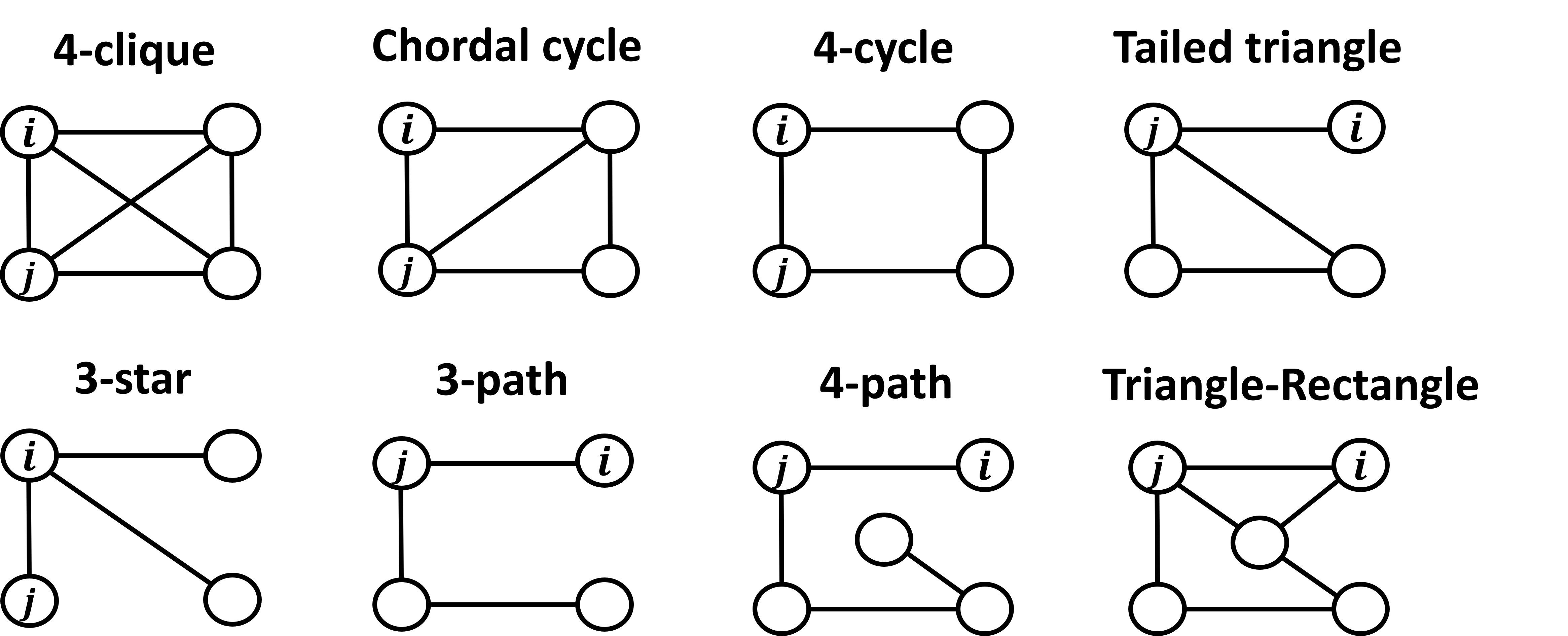}
    \caption{Some of the connected Graphlets with size 4 and 5. We define the node-level counting $C(\text{graphlet}, i, G)$ be the number of graphlets involving $i$ at the position shown in the figure.}
    \label{graphlet}
\end{figure}
All connected graphlets with size 3 include 2-paths and 3-cycles, which have been proven before. Now let us consider all connected graphlets with size 4, shown by the first six graphs in Figure \ref{graphlet}. 

\textbf{4-Cliques.} The definition of $4$-clique is equivalent to a triangle with all nodes connected to an additional node. Thus if $i,j$ are the root node and one of its neighbors, then number of $4$-cliques equals to number of triangles that involve $j$ but not $i$ and whose nodes are connected to $i$. Based on this observation, we design the following message passing:
\begin{subequations}
\begin{align}
    \begin{pmatrix}
    h^{(1)}_{i,j,k,1}\\ h^{(1)}_{i,j,k,2} 
    \end{pmatrix} &=\mathbbm{1}_{k\ne i}\mathbbm{1}_{k\ne j}\sum_{l\in N(k)}\begin{pmatrix}
    \mathbbm{1}_{l=i}\\\mathbbm{1}_{l=j}
    \end{pmatrix} ,\\
   h^{(2)}_{i,j,k} &= \mathbbm{1}_{k\ne i}\mathbbm{1}_{k\ne j}h^{(1)}_{i,j,k,1}h^{(2)}_{i,j,k,2}\sum_{l\in N(k)}h^{(1)}_{i,j,l,1}h^{(1)}_{i,j,l,2} .
\end{align}
\end{subequations}
Then on can find that $\frac{1}{6}\sum_{j\in N(i)}\sum_{k\in V}h^{(2)}_{i,j,k}$ equals to the number of 4-clique involving node $i$.

\textbf{Chordal cycles.} We first mark all the common neighbors of $i,j$ by one layer of message passing, represented by $h^{(1)}_{i,j,k}=\mathbbm{1}_{k\in N(i)}\mathbbm{1}_{k\in N(j)}$. Then the key is to find number of triangles that involves both the common neighbors and $j$. We therefore construct the second layer message passing as:
\begin{equation}
    h^{(2)}_{i,j,k} 
 = \mathbbm{1}_{k\ne i}\mathbbm{1}_{k\ne j}\sum_{l\in N(k)} h^{(1)}_{i,j,l}
\end{equation}
Then one should see that $\frac{1}{4}\sum_{i\in V}\sum_{j\in N(i)}\sum_{k\in N(j)}h^{(2)}_{i,j,k}$ equals to the number of chordal cycles in the graph.

\textbf{Tailed triangles.} Let $i$ be the node that connects tailed node $j$. Then let $h_{i,j}$ be number of triangles involving $i$ but not $j$. One can see that $\sum_{i\in V}\sum_{j\in N(i)}h_{i,j}$ is the number of tailed triangles.

\textbf{4-cycles and 3-paths.} By theorems \ref{i2gnn-c} and \ref{i2gnn-p}.

\subsection{Some other graphlets}
\textbf{4-paths.} By theorem \ref{i2gnn-p}.

\textbf{Triangle-Rectangles.} Let $i$ be the node that is in the triangle but not in the rectangle. Let $j$ be the neighbor of $i$. Recall that we can use the labeling of $j$ to mark the 3-hop nodes. Then by determining if these nodes are the common neighbors of $i,j$, we can count the number of triangle-rectangle.

\section{Additional details of the numerical experiments}
\begin{table*}[h]
  \vspace{-5pt}\caption{Statistics of the synthetic, ChEMBL, QM9, ZINC and OGB datasets.}
    \centering
     \vspace{-8pt}
    \begin{tabular}{lccccc}
    \Xhline{2.5\arrayrulewidth}
        Dataset & \#Graphs &Avg. \#Nodes & Avg. \#Edges  &Task type   \\ \hline
        Synthetic & 5,000 & 18.8 & 31.3  & Node regression\\
        ChEMBL & 16,200 & 21.5 &  22.9 & Node regression \\
        QM9 & 130,831 & 18.0 & 18.7 & Graph regression\\
        ZINC-12k &12,000&23.2 & 24.9  & Graph regression\\
        ZINC-250k &249,456 & 23.1 & 24.9 & Graph regression\\
        ogbg-molhiv & 41,127 & 25.5 & 27.5 & Graph classification  \\
    \Xhline{1.5\arrayrulewidth}
    \end{tabular}
    \label{exp-zinc}
\end{table*}
\subsection{Graph structures counting}
\label{app-ex-count}
\textbf{Dataset.} The synthetic dataset is provided by open-source code of GNNAK on \href{https://github.com/LingxiaoShawn/GNNAsKernel/tree/main/data}{github}. The real-world dataset ChEMBL (https://www.ebi.ac.uk/chembl/) contains bioactive molecules with drug-like properties. We filter the dataset by the following process: we compute the average number of 4,5,6-cycles per atom for each molecule, add uniform noises ($U(-0.05, 0.05)$) and sort the molecules based on the average number of cycles per node. We then apply CD-HIT~\cite{li2006cd}, a clustering algorithm, to cluster the molecules with similarity measured by their fingerprints. We screen out 16,200 cluster centers such that each pair of them has a similarity less than 0.4. These 16,200 centers (molecules) are used to build our final dataset. The ground-truth labels are obtained by networkx.algorithms.isomorphism package.  

\textbf{Models.} For ID-GNNs, Nested GNNs and I$^2$-GNNs, we uniformly adopt a 5-layer GatedGNNs~\cite{li2015gated} as base GNNs. For GNNAK+, we use 5 GNNAK+ layers, each of which is a 1-layer GatedGNN. The embedding size of mentioned models above is 64. The subgraph height is 1, 2, 2, 3 for 3, 4, 5 and 6-cycles respectively. The initial features are augmented with shortest path distances except ID-GNNs. For PPGNs, we use 4 PPGN layers with 2 blocks and  embedding size 300. The subgraph height $h$ is 1,2,2,3,2,2,1,4,2 for 3-cycles, 4-cycles, 5-cycles, 6-cycles, tailed triangles, chordal cycles, 4-cliques, 4-paths and triangle-rectangle respectively, so that the subgraph can include the graphlet.

\textbf{Training details.} The training/validation/test splitting ratio is 0.3/0.2/0.5 on synthetic dataset, and 0.6/0.2/0.2 on ChEMBL dataset. We use Adam optimizer with initial learning rate 0.001, and use plateau scheduler with patience 10 and decay factor 0.9. We train 2,000 epochs for each models. The batch size is 256.

\subsection{Molecular properties prediction}
\label{app-ex-mol}
\subsubsection{QM9}
\textbf{Dataset.} The QM9 dataset is provided by pytorch geometric. 

\textbf{Models.} We adopt 5-layer NNConv~\cite{gilmer2017neural} as base GNNs of I$^2$-GNNs. The embedding size is 64, and the subgraph height is 3. The initial features are augmented with shortest path distances and resistance distances~\cite{lu2011link}.

\textbf{Training details.} The training/validation/test splitting ratio is 0.8/0.1/0.1. We use Adam optimizer with initial learning rate 0.001, and use plateau scheduler with patience 10 and decay factor 0.95. We train 400 epochs with batch size 64 for each target separately.

\subsubsection{ZINC}
\textbf{Dataset.} We use the dataset provided by \cite{dwivedi2020benchmarking}. It contains ZINC-12k and a ZINC-250k.

\textbf{Models.} We adopt 5-layer GINE~\cite{hu2019strategies} as base GNNs of I$^2$-GNNs. The embedding size is 64, and the subgraph height is 3. The initial features are augmented with shortest path distances and resistance distances. To further improve the empirical performance, we concatenate the pooled node embedding $h_{k}$ to the corresponding node $h_{i,j,k}$ at each layer: $h_{i,j,k}\leftarrow h_{i,j,k}\oplus h_k$. 

\textbf{Training details.} The training/validation/test splitting is given by the dataset. We use Adam optimizer with initial learning rate 0.001, and use plateau scheduler with patience 10 and decay factor 0.95. We train 1,000 epochs with batch size 256 on Zinc-12k, and train 800 epochs with batch size 256 on Zinc-250k.

\subsubsection{OGB}
\textbf{Dataset.} The ogbg-molhiv dataset is provided by Open Graph Benchmark (OGB). 

\textbf{Models.} We adopt 5-layer GINE as base GNNs of I$^2$-GNNs/ The embdding size is 300, and the subgraph height is 4. The initial node features are augmented with shortest path distances and resistance distances. 

\textbf{Training details.} The training/validation/test splitting is given by the dataset. We use Adam optimizer with initial learning rate 0.001, and use step scheduler with step size 20 and decay factor 0.5. We train 50 epochs with batch size 64. 

\section{Cycles statistics on datasets}
We demonstrate the the average number of cycles on different datasets. We can see that 5-cycles and 6-cycles are dominant compared to 3-cycles and 4-cycles. It reflects the importance of counting to 6-cycles.
\begin{table*}[h]
  \vspace{-5pt}\caption{Average number of cycles per graph on the synthetic, ZINC and OGB datasets.}
    \centering
     \vspace{-8pt}
    \begin{tabular}{lccccc}
    \Xhline{2.5\arrayrulewidth}
        Dataset & Avg. \# 3-Cycles & Avg. \# 4-Cycles & Avg. \#  5-Cycles  & Avg.\# 6-Cycles   \\ \hline
        Synthetic & 5.04 & 10.60 & 21.67  & 41.6\\
        ChEMBL & 0.06 & 0.03 & 0.69 & 1.72 \\
        ZINC-12k &0.06&0.02&0.85&1.80\\
        ogbg-molhiv &0.03 & 0.04&0.70&2.28  \\
    \Xhline{1.5\arrayrulewidth}
    \end{tabular}
    \label{exp-zinc}
\end{table*}
\section{More about counting experiments}
\label{app-add-count}
\subsection{Counting error versus number of cycles}
In addition, we try to understand the performance of I$^2$-GNNs by further analysis and visualization. In subfigures (a,c) in Figure \ref{count_dist} we visualize the label distribution of 5-cycles and 6-cycles on synthetic dataset, i.e. nodes with different number of cycles. We see that the number of nodes with increasing cycles decays exponentially. Thus it implies the difficulty to generalize to nodes with more cycles. In the subfigures (b,d) in Figure \ref{count_dist} we show the corresponding prediction average MAE on nodes with different number of cycles. we can see that the MAE of Subgraph GNNs increases dramatically as the number of cycles become larger. In contrast, I$^2$-GNNs have a almost flatten curve, implying it fits the real counting function well and thus generalizes better. 
\begin{figure}[t]
\centering
\subfigure[]{
\begin{minipage}[t]{0.48\textwidth}
\centering
\includegraphics[width=7cm, height=5cm]{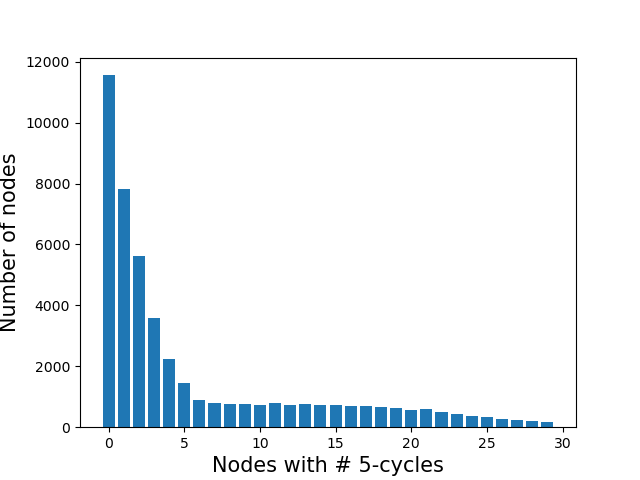}
\label{dist_5}
\end{minipage}
}
\subfigure[]{
\begin{minipage}[t]{0.48\textwidth}
\centering
\includegraphics[width=7cm, height=5cm]{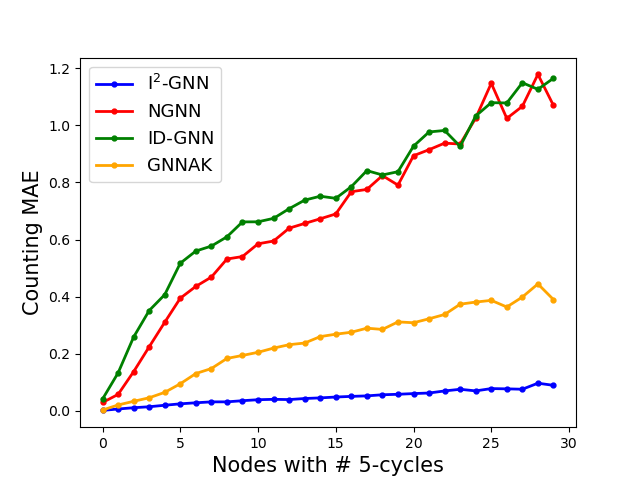}
\label{pred_5}
\end{minipage}
}
\subfigure[]{
\begin{minipage}[t]{0.48\textwidth}
\centering
\includegraphics[width=7cm, height=5cm]{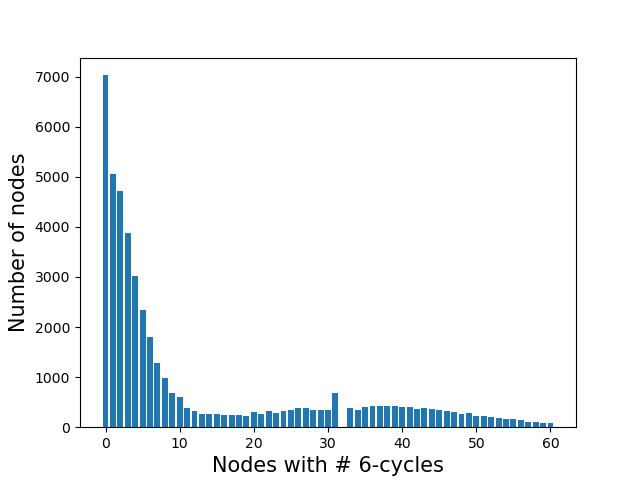}
\label{dist_6}
\end{minipage}
}
\subfigure[]{
\begin{minipage}[t]{0.48\textwidth}
\centering
\includegraphics[width=7cm, height=5cm]{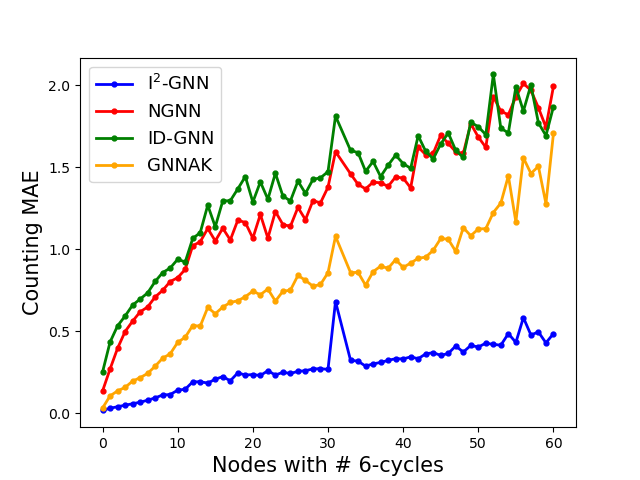}
\label{pred_6}
\end{minipage}
}
\caption{Left column: distribution of number of cycles of nodes. Right column: prediction MAE conditioning on nodes with certain number of cycles.}
\label{count_dist}
\end{figure}

\subsection{Results of counting at graph level}
Using the same setup in node-level counting, we also conduct counting experiments at graph level. Note that I$^2$-GNNs can count 6-cycles at graph level with theoretical guarantees (by Theorem \ref{i2gnn-c}), while other Subgraph GNNs are unclear.  Interestingly, the experimental results in Table \ref{exp_count_graph} show that all Subgraph GNNs attain comparable performance with respect to I$^2$-GNNs and PPGNs. It may be because: (1) they capture some statistical biases that are highly correlated to graph-level counting functions; (2) they can count up to $6$-cycles at graph level  theoretically. The graph-level counting power of Subgraph GNNs is unclear from $5$-cycles to $7$-cycles and worthy exploring in the future research.
\begin{table}[h]
  \vspace{-5pt}\caption{Normalized MAE results with errors of counting substructures at graph level. The reported uncertainty is one times standard deviation.} 
\renewcommand\arraystretch{1.25}
\centering
    \vspace{-8pt}
\begin{tabular}{lcccc}
 \Xhline{2.5\arrayrulewidth}
\multirow{2}{*}{Method} & \multicolumn{4}{c}{Synthetic (norm. MAE)} \\ 
\cmidrule(r){2-5}
& 3-Cycle      & 4-Cycle     & 5-Cycle     & 6-Cycle     \\ \Xhline{1.7\arrayrulewidth}
GatedGCN & $0.3845\!\pm\!0.0057$ & $0.2481\!\pm\!0.0010$   &  $0.1849\!\pm\!0.0014$   & $0.1500\!\pm\!0.0036$  \\
ID-GNN   & $0.0008\!\pm\!0.0005$ &  $0.0064\!\pm\!0.0001$ & $0.0070\!\pm\!0.0010$ & $0.0182\!\pm\!0.0010$         \\
NGNN  & $0.0009\!\pm\!0.0002$  &  $0.0054\!\pm\!0.0005$ & $0.0077\!\pm\!0.0026$ & $0.0249\!\pm\!0.0010$           \\
GNNAK+  &  $0.0021\!\pm\!0.0004$& $0.0177\!\pm\!0.0026$  &$0.0233\!\pm\!0.0022$  &$0.0412\!\pm\!0.0025$             \\
PPGN&   $0.0013\!\pm\!0.0007$    &   $0.0050\!\pm\!0.0005$  &  $0.0123\!\pm\!0.0013$ &  $0.0159\!\pm\!0.0015$           \\\hline
I$^2$-GNN   &  $0.0010\!\pm\!0.0001$  &  $0.0066\!\pm\!0.0011$   &$0.0098\!\pm\!0.0005$ & $0.0237\!\pm\! 0.0011$    \\
 \Xhline{2.5\arrayrulewidth}
\end{tabular}
\label{exp_count_graph}
\end{table}

\section{Ablation Study}
\label{app-ab}
 In ablation study, we focus on studying the impact of the key element of I$^2$-GNNs: the additional identifiers. Concretely, let $i$ be the root node and $j$ be the branching node (neighbor of $i$), we drop all the node labeling concerning $j$ in the subgraph $G_{i,j}$. It results in a model denoted I$^2$-GNNs (single) with only one identifier and all other hyper-parameters unchanged. In additional, we also study the effect of the shortest path labeling v.s. identity labeling. We implement I$^2$-GNNs (id), where the shortest path labelings $z_{i,j,k}=\text{spd}(i,k)\oplus \text{spd}(j,k)$ are replaced with the identity labeling $z_{i,j,k}=\mathbbm{1}_{i=k}\oplus \mathbbm{1}_{j=k}$.
 \begin{table*}[h]
  \vspace{-5pt}\caption{Ablation study on single node labeling and identity labeling. The reported uncertainty is one times standard deviation.}
    
    \centering
     \vspace{-8pt}
    \begin{tabular}{cccc}
    \Xhline{2.5\arrayrulewidth}
         Dataset/Task& I$^2$-GNNs & I$^2$-GNNs (single) & I$^2$-GNNs (id)\\\hline
          3-Cycle&$0.0003\!\pm\!0.0000$&$0.0004\!\pm\!0.0002$ & $0.0003\!\pm\!0.0001$\\
         4-Cycle&$0.0015\!\pm\!0.0001$ &$0.0021\!\pm\!0.0001$ &$0.0015\!\pm\!0.0001$\\ 
         5-Cycle&$0.0028\!\pm\!0.0001$ & $0.0426\!\pm\!0.0005$&$0.0034\!\pm\!0.0000$\\
         6-Cycle&$0.0078\!\pm\!0.0007$ &$0.0465\!\pm\!0.0009$ &$0.0103\!\pm\!0.0021$\\
         4-Clqiue & $0.0003\!\pm\!0.0001$&$0.0604\!\pm\!0.0902$ & $0.0012\!\pm\!0.0014$ \\
         4-Path & $0.0041\!\pm\!0.0010$& $0.0241\!\pm\!0.0007$& $0.0019\!\pm\!0.0001$ \\
         Triangle-Rectangle &$0.0026\!\pm0.0002$ &$0.0566\!\pm\!0.0015$ &$0.0017\!\pm\!0.0001$ \\
    \Xhline{2.5\arrayrulewidth}
    \end{tabular}
    \label{app-ab}
\end{table*} 
From Table \ref{app-ab} we can see that performance drops greatly for $5,6$-cycles, $4$-cliques, $4$-paths and triangle-rectangle after removing the additional identifiers. This is consistent to our claim that the additional identifiers do boost the counting power. In the comparison between shortest path labeling and identity labeling, we can see that they do not dominate each other. This is because they have the same representational power and thus the results should depend on the inductive bias of the task. 
\section{Discussion on complexity}
\label{app-complexity}
\subsection{Comparison to other GNN models}
I$^2$-GNNs increase the representational power by extending Subgraph MPNNs (single identifier) to a pair of identifiers. In the language of equivariant tensors, I$^2$-GNNs lift the original graph (adjacency matrix, 2-order tensor) to a 4-order tensor (first two indices for the two identifiers, last two indices for adjacency), and perform message passing on this 4-order tensor. Thus one should expect I$^2$-GNNs' power to be upper bounded by 4-IGN/4-WL, as implied in \cite{frasca2022understanding}. Regarding the complexity, suppose a graph has $N$ nodes with average node degree $d$, and the maximum subgraph size is set to $s$. One layer message passing takes $\mathcal{O}(d)$ time for one node. By leveraging the sparsity of graphs, I$^2$-GNNs reach a linear time complexity $\mathcal{O}(Nsd^2)$ w.r.t. node number $N$. Specifically, counting 6-cycles only requires extracting 3-hop rooted subgraphs, resulting in $\mathcal{O}(Nd^5)$ complexity. Note that in molecule datasets the node degree is usually small (e.g., QM9 has an average node degree $2.1$), which makes the computation feasible. This is in contrast to the at least $\mathcal{O}(N^3)$ complexity of those high-order GNNs such as PPGNs~\cite{maron2019provably} and  $k$-IGNs~\cite{maron2018invariant}. Some works also try to design localized high-order GNNs. For example, $k$-GNNs~\cite{morris2019weisfeiler} and $\delta$-$k$-LGNNs~\cite{morris2020weisfeiler} reduce the time complexity to linear by constraining the $k$-tuples to be \textit{connected} and aggregating messages from neighbors only. Compared to them, I$^2$-GNNs have an exclusive advantage: the initial features $x_{i,j,k}$ can be augmented with some three-tuple attributes, such as $x_{i,j,k}=(\text{spd}(i,k),\text{spd}(j,k))$ or other structural/positional encodings. Besides, I$^2$-GNNs have provable substructure counting power. Local relational pooling~\cite{chen2020can} is a universal approximator for permutation invariant functions over subgraphs, but the $\mathcal{O}(Ns!)$ complexity is almost infeasible. A contemporary work \cite{qian2022ordered} also discusses the possibility of high-order Subgraph GNNs: they propose to tie each subgraph with a $k$-tuple. However, their main focus is the theoretical comparison with $k$-WL test. Besides, the complexity of their original model grows exponentially. 
\begin{table}[h]
\centering
\caption{Space and time complexity of different GNN models.}
\begin{tabular}{l|ccccc}
\hline
Model  & MPNN & Subgraph MPNN & I$^2$-GNN & PPGN & 1-2-3 GNN  \\ \hline
Space complexity        &$\mathcal{O}(N)$ &   $\mathcal{O}(Ns)$ & $\mathcal{O}(Nsd)$   & $\mathcal{O}(N^2)$ & $\mathcal{O}(Nd^2)$\\
Time complexity       &$\mathcal{O}(Nd)$  & $\mathcal{O}(Nsd)$ & $\mathcal{O}(Nsd^2)$  & $\mathcal{O}(N^3)$ & $\mathcal{O}(Nd^3)$\\
\end{tabular}
\label{app-complexity}
\end{table}

\subsection{Empirical complexity evaluation}
\begin{table}[h]
\centering
\caption{Empirical evaluation of complexity of I$^2$-GNNs, Subgraph GNNs and PPGNs.}
\begin{tabular}{l|cccccc}
\hline
Model  & MPNN & ID-GNN &NGNN  & GNNAK+ & I$^2$-GNN & PPGN  \\ \hline
$\#$Parms      &27k  & 102k&  127k& 251k & 143k&96k\\
Memory usage~(GB) & 1.88& 2.35& 2.34& 2.35 & 3.59 & 2.30\\
Inference time~(ms) & 2.10& 5.73&6.03 & 16.07 & 20.62 & 35.33\\
\end{tabular}
\label{app-time}
\end{table}
We empirically compare the models' complexity through: (1) number of parameters ($\#$Parms); (2) Maximal GPU memory usage (Max. memory usage) and (3) inference time. We adopt node-level counting task. The model implementation is exactly the same as in \ref{app-ex-count}, except that PPGN's embedding size is adjusted to 64 for a fair comparison. Batch size is 256, and we run 10 epochs to compute the average inference time per batch and the maximal memoery usage during inference. We use thop package to estimate the number of parameters. Note that in the realizations of listed Subgraph GNNs and I$^2$-GNNs, the subgraph extraction is preprocessed and thus the forward computation is in parallel. This causes a greater memory usage but a faster speed of training/inference.

From the Table \ref{app-time} we can see that the inference time of I$^2$-GNNs is feasible, since the average node degree of counting dataset is approximately 3.3.

\section{Scaling up: branch node sampling}
Like Subgraph MPNNs, the extracted $K$-hop ego-networks size in I$^2$-GNNs is $\mathcal{O}(d^K)$. In addition, compared to Subgraph MPNNs, I$^2$-GNNs further increase the complexity by a factor of node degree $d$. In case where average node degree is large or the node degree distribution is heavy-tailed, the computational cost might still be unfordable. 

\begin{table*}[h]
  \vspace{-5pt}\caption{Statistics of the selected datasets from TUD benchmark.}
    \centering
     \vspace{-8pt}
    \begin{tabular}{lcccc}
    \Xhline{2.5\arrayrulewidth}
        Dataset & \#Graphs &Avg. \#Nodes & Avg. \#Edges     \\ \hline
        ENZYMES & 600 & 32.6 & 62.1  \\
        IMDB-BINARY & 1,000 & 19.8 & 96.5  \\
    \Xhline{1.5\arrayrulewidth}
    \end{tabular}
    \label{stat-tud}
\end{table*}

One can apply any subgraph sampling strategy to address the growing subgraph size problem, as the one in~\cite{zhao2021stars}. Here we focus on alleviate the growing branch node problem. We provide a simple but effective sampling strategy: randomly sample a fixed number of branch nodes to bound the additional factor by a constant. The sampling are only applied to training process, and thus permutational equivariance still holds during test.

\begin{table*}[h]
    \centering
     \caption{Test accuracy (\%) with one standard deviation on TUD benchmark. I$^2$-GNN-sample refers to I$^2$-GNNs that randomly choose and fix two branch nodes to label before training.}
    \begin{tabular}{c|cccc}
    \hline
        Datasets & Base GNN & Nested GNN & I$^2$-GNN & I$^2$-GNN-sample \\ \hline
         ENZYMES &27.3±7.8 &31.7±3.7 &35.8±7.1 &33.8±5.9\\
         IMDB-BINARY & 70.2±5.1 & 71.4±5.9 & 73.5±3.0&72.7±3.9
         
    \end{tabular}
    \label{exp-tud}
\end{table*}
\begin{table}[h!]
    \centering
      \caption{\#parameters / training time per 200 epochs (s) / inference time per epoch (ms) on TUD benchmark. Training/inference time are estimated on full dataset.}
           \resizebox{0.985\textwidth}{!}{
    \begin{tabular}{c|cccc}
    \hline
        Datasets & Base GNN & Nested GNN & I$^2$-GNN & I$^2$-sample-GNN \\ \hline
         ENZYMES &24k/19.45/59.09 &25k/30.31/107.44 &34k/62.82/213.44 &34k/45.68/213.03\\
         IMDB-BINARY & 24k/31.19/88.49 & 25k/65.28/238.91 & 34k/355.98/1273.30 & 34k/\textbf{100.33}/1270.62
         
    \end{tabular}
    }
    \label{time}
\end{table}

Formally, let $N(i)$ be the neighbor of root node $i$. An I$^2$-sample-GNN has the same architecture with I$^2$-GNN during training, except that 
\begin{itemize}
    \item in equation (\ref{i2gnn-forward}), the branch node $j$ iterates over $\text{Sample}_S(N(i))$ instead of $N(i)$, where $\text{Sample}_S$ is a sampling function that randomly chooses $S$ elements from the multiset. 
    \item in equation (\ref{i2gnn-node}), the node read-out function becomes 
    \begin{equation}
        \forall i\in V, \quad h_i = \alpha_i\cdot R_{\text{node}}(\{h_{i,j}|j\in \text{Sample}_S(N(i))\}),
    \end{equation}
    where $\alpha_i=\frac{d_i}{S}$ if $R_{\text{node}}$ is additive to node degree $d_i$ (such as sum pooling), and $\alpha_i=1$ if $R_{\text{node}}$ is independent of node degree $d_i$ (such as average pooling).
\end{itemize}
On the other hand, sampling is disabled when doing inference in order to assure permutational equivariance, i.e. I$^2$-sample-GNN use the exactly same architecture with I$^2$-GNN during inference. Note that because of the stochastic branch node sampling, I$^2$-sample-GNN would not be able to learn exact counting function. Instead it approximates the counting function in an expectational meaning.

To test the effectiveness of branch node sampling, we choose some high-node-degree datasets from TUD benchmark~\cite{morris2020tudataset}. Table \ref{stat-tud} above shows the dataset statics. We randomly choose and fix two branch nodes for each root node before training. The sampling does not apply to testing. We compare performance of I$^2$-GNNs with branch node sampling to that of original I$^2$-GNNs, base GNN (4-layer GraphConv) and Nested GNN. The subgraph depth is $3$ for ENZYMES and $2$ for IMDB-BINARY. We train each model for 200 epochs, with initial learning rate $0.002$ and decay rate $0.5$ per 50 epochs. We report the 10-fold cross validation results in Table \ref{exp-tud}. It demonstrates that I$^2$-GNNs with even simple branch node sampling still can bring improvements to baselines and achieve competitive performance to original I$^2$-GNNs.

 Moreover, we compare the number of parameters, the training/test time for these models, as shown in Table \ref{time}. As we expected, I$^2$-sample-GNN can significantly reduce the training time, especially for high-node-degree dataset IMDB-BINARY.

\end{document}